\documentclass{article} 
\usepackage{iclr2026_conference,times}
\usepackage{amsthm}
\usepackage{thmtools}  
\usepackage{booktabs}
\usepackage{pifont}
\usepackage{graphicx}

\declaretheorem[name=Remark]{remark}

\usepackage{amsmath,amsfonts,bm}

\def\eqref#1{equation~\ref{#1}}

\def\1{\bm{1}}

\def\vc{{\bm{c}}}

\def\vu{{\bm{u}}}

\def\vx{{\bm{x}}}

\def\vz{{\bm{z}}}

\def\mI{{\bm{I}}}

\DeclareMathAlphabet{\mathsfit}{\encodingdefault}{\sfdefault}{m}{sl}
\SetMathAlphabet{\mathsfit}{bold}{\encodingdefault}{\sfdefault}{bx}{n}

\def\gL{{\mathcal{L}}}

\def\gN{{\mathcal{N}}}

\def\gU{{\mathcal{U}}}

\def\sR{{\mathbb{R}}}

\newcommand{\E}{\mathbb{E}}

\usepackage{hyperref}
\usepackage{url}

\usepackage{enumitem}
\usepackage{comment}
\usepackage{wrapfig}

\def\BW#1{{\textcolor{red}{#1}}}

\title{
RMFlow: Refined Mean Flow by a Noise-Injection Step for Multimodal Generation
}

\author{Yuhao Huang$^1$, Shih-Hsin Wang$^1$, Andrea L. Bertozzi$^2$, Bao Wang$^4$ \thanks{Correspond to \texttt{wangbaonj@gmail.com}} \\
$^1$Department of Mathematics and Scientific Computing and Imaging (SCI) Institute \\
University of Utah, Salt Lake City, UT, 84102, USA\\
$^2$Department of Mathematics, UCLA, Los Angeles, CA, 90095, USA\\
}

\iclrfinalcopy 
\begin{document}

\maketitle

\begin{abstract}
Mean flow (MeanFlow) enables efficient, high-fidelity image generation, yet its single-function evaluation (1-NFE) generation often cannot yield compelling results. We address this issue by introducing RMFlow, an efficient multimodal generative model that integrates a coarse 1-NFE MeanFlow transport with a subsequent tailored noise-injection refinement step. RMFlow approximates the average velocity of the flow path using a neural network trained with a new loss function that balances minimizing the Wasserstein distance between probability paths and maximizing sample likelihood. RMFlow achieves near state-of-the-art results on text-to-image, context-to-molecule, and time-series generation using only 1-NFE, at a computational cost comparable to the baseline MeanFlows.
\end{abstract}

\section{Introduction}\label{sec:intro}
\begin{wrapfigure}{r}{0.52\linewidth}
\vspace{-0.35cm}
\centering
\includegraphics[width=0.45\columnwidth]{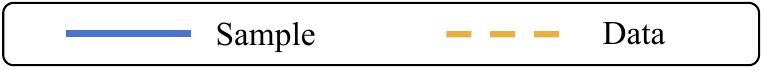}
\begin{tabular}{cc}
\includegraphics[width=0.22\columnwidth]{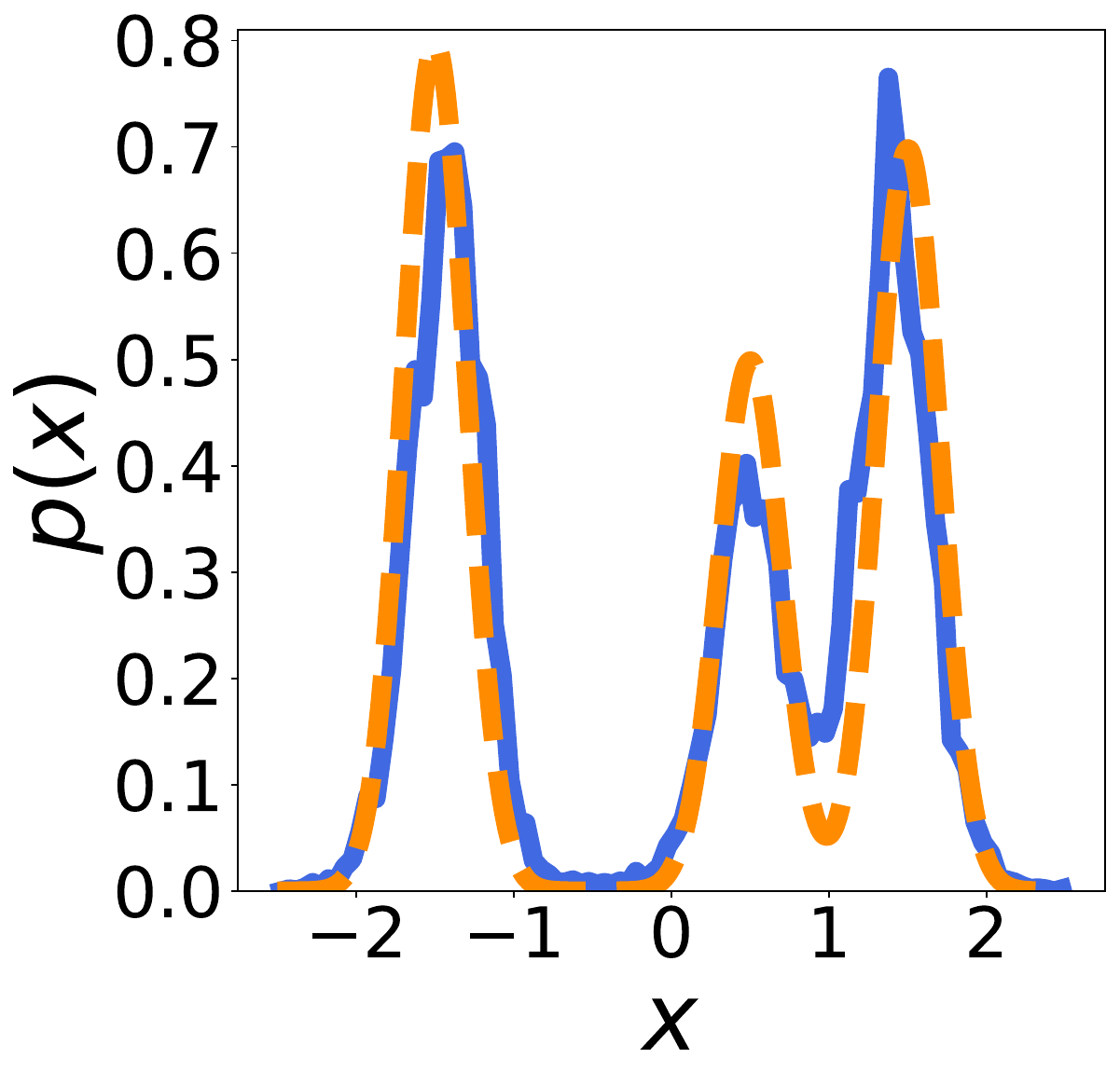}& 
\includegraphics[width=0.22\columnwidth]{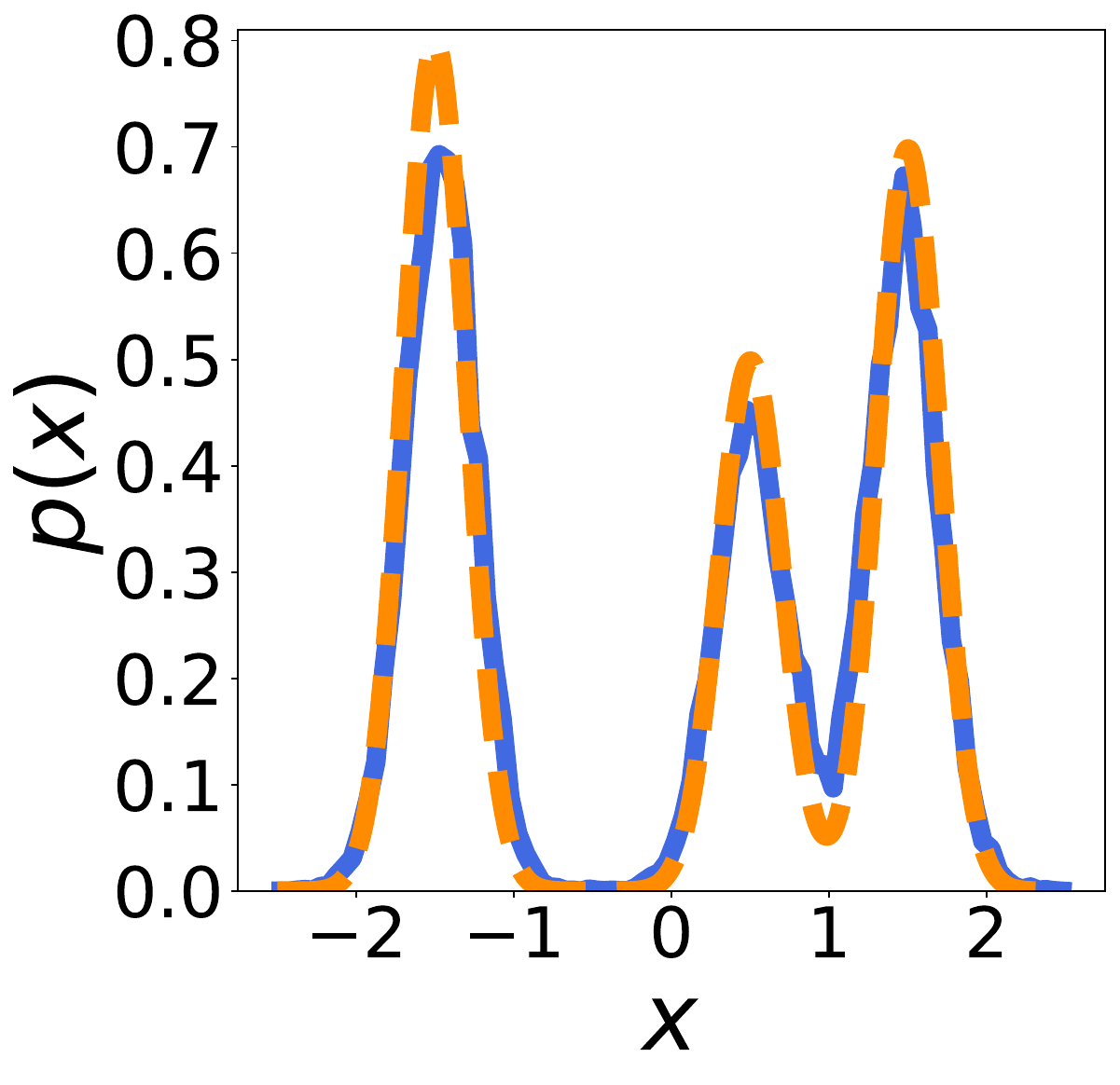}\\[-3pt]
{\scriptsize 1-NFE MeanFlow}& {\scriptsize 8-NFE MeanFlow}\\
\includegraphics[width=0.22\columnwidth]{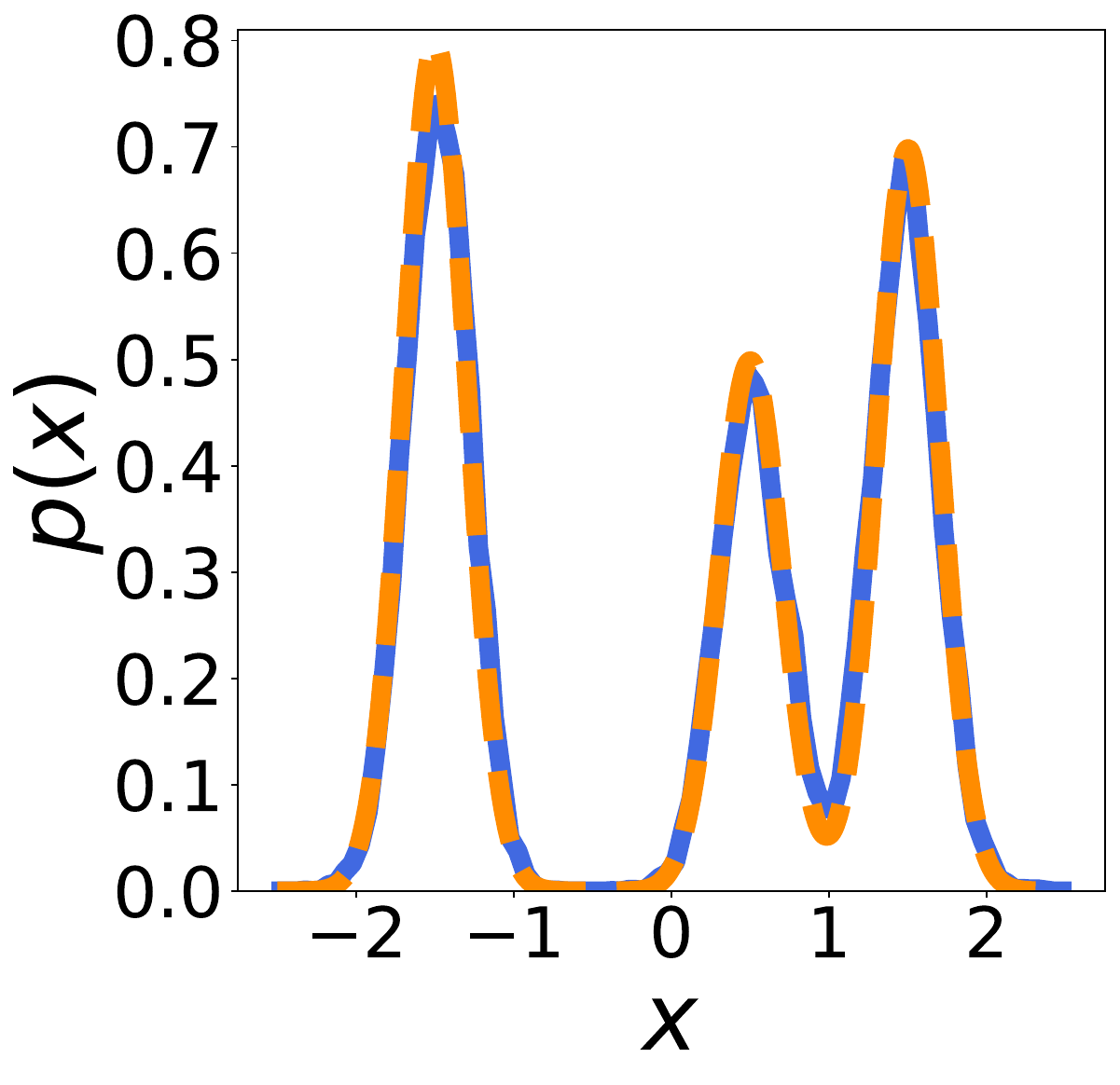}&
\includegraphics[width=0.22\columnwidth]{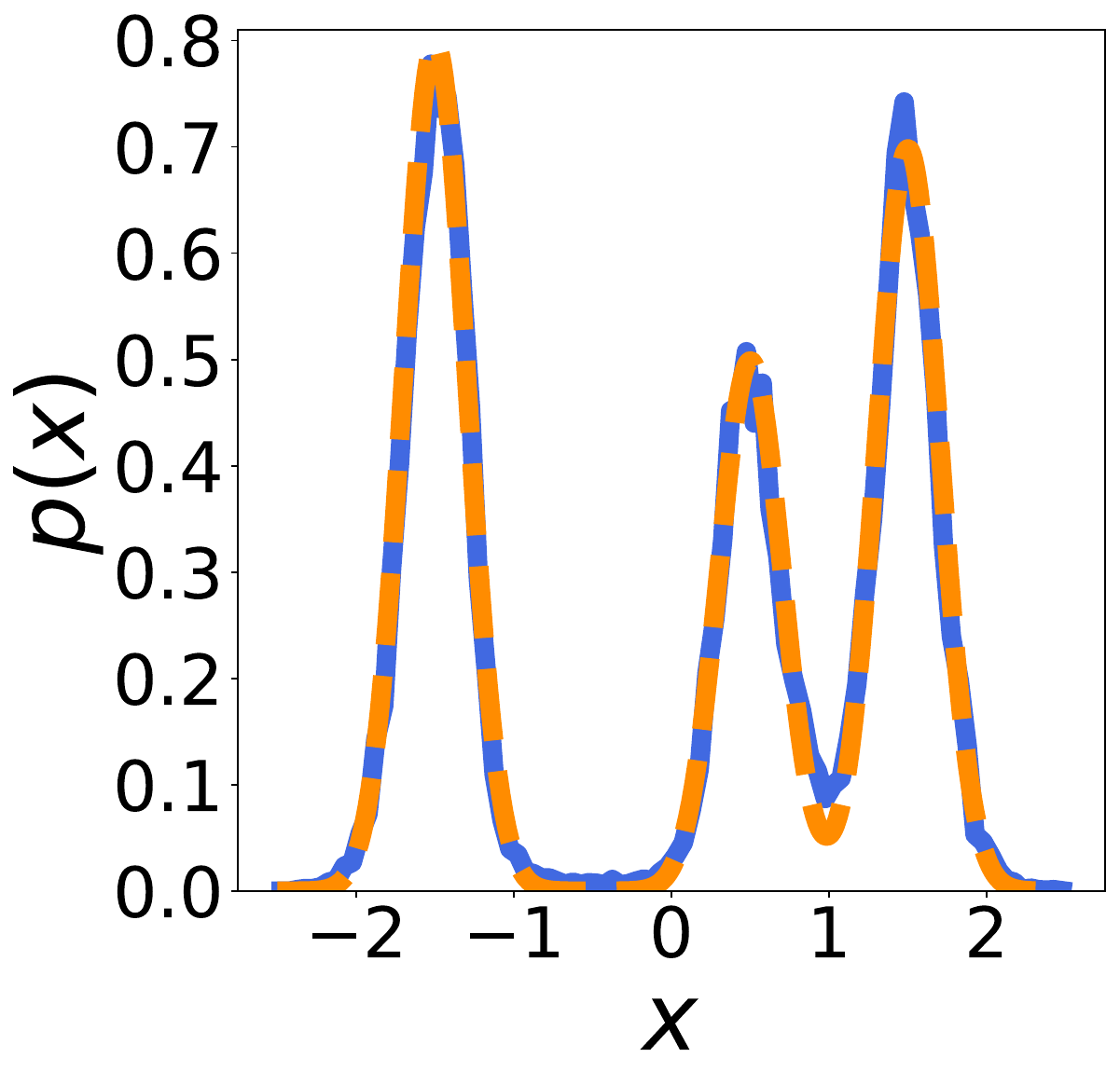}\\[-3pt]
{
32-NFE MeanFlow}& {
1-NFE RMFlow ({\bf ours})}  \\
\end{tabular}\vspace{-0.2cm}
\caption{
Contrasting MeanFlow with RMFlow for mixture Gaussian sampling; see Section~\ref{subsec:density-estimation} for experimental details and more results.}
\label{fig:rmflow_mixG}\vspace{-0.35cm}
\end{wrapfigure}
Flow matching (FM), closely related to diffusion models (DMs) \citep{sohl2015deep,ho2020denoising,song2020denoising}, has emerged as a flexible framework for generative modeling, offering a principled way to learn transport between two distributions (cf.~\cite{lipman2023flow,liu2023flow,albergo2023building}). By approximating the instantaneous velocity field of this transport with a neural network, FM enables high-fidelity multimodal generation by solving the ordinary differential equation (ODE) with the neural network-approximated vector field as its forcing term \citep{esser2024scaling,ma2024sit,polyak2024movie,jing2024alphafold,campbell2024generative}. Nevertheless, this high-fidelity generation requires multiple expensive neural network evaluations, counted by the number of function evaluations (NFEs) \citep{chen2018neural}.

Several approaches aim to accelerate diffusion- and flow-based models for high-fidelity generation with only a few NFEs. Among these, consistency models (CMs) \cite{song2023consistency,geng2024consistency,song2023improved,lu2025simplifying} achieve remarkable performance and efficiency. 
Distillation is a noticeable idea; for instance, local FM \citep{xu2024local} breaks the flow into local sub-flows, enabling smaller models and easier distillation.

Recently, flow maps \citep{boffi2024flow,boffi2025build} and mean flows (MeanFlows) (\cite{geng2023one}; cf.~Section~\ref{sec:background}) have been proposed to enable aggressive 1-NFE generation, and a prominent advantage of flow maps and MeanFlows is that they require no pre-training, distillation, or curriculum learning. Empirically, MeanFlows achieve high-quality image generation with fewer transport steps than FM models. However, preserving this quality typically requires multiple evaluations of the mean velocity field, as collapsing the process to 1-NFE often causes significant performance degradation. We showcase this issue by sampling a mixture Gaussian distribution using MeanFlow; see Section~\ref{subsec:density-estimation} for experimental details. Figure~\ref{fig:rmflow_mixG} shows the significant gap between exact (data) and sampled distributions when using 1-NFE MeanFlow, and this gap reduces as NFE increases.

\begin{wrapfigure}{l}{0.6\columnwidth}
\centering
\begin{tabular}{cc}
\hspace{-0.2cm}\includegraphics[width=0.28\columnwidth]{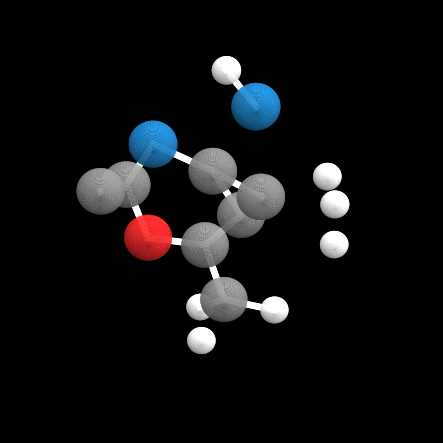}&
\hspace{-0.3cm}\includegraphics[width=0.28\columnwidth]{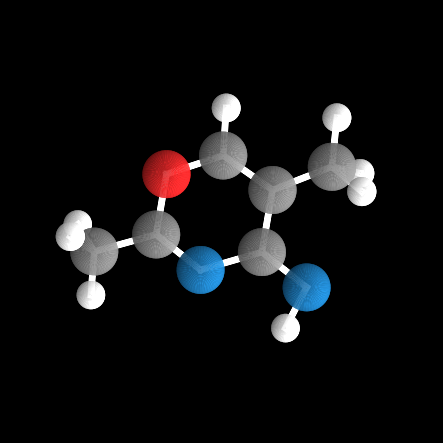}\\
{
1-NFE MFM} &{
1-NFE RMFlow ({\bf ours})}\\
\end{tabular}\vspace{-0.1cm}
\caption{
Contrasting MeanFlow with RMFlow, under the same context, for QM9 molecule generation.}\vspace{-0.15cm}
\label{fig:Contrasting_molecules}
\end{wrapfigure}
We further showcase the significant generation error of 1-NFE MeanFlow for the benchmark QM9 molecule generation \citep{ramakrishnan2014quantum}; see Section~\ref{subsec:QM9} for experimental details and additional results. Figure~\ref{fig:Contrasting_molecules} illustrates that 1-NFE MeanFlow produces an invalid structure, where the molecule is fragmented into multiple disconnected pieces. Indeed, in our experiments, we consistently observed that 1-NFE MeanFlow frequently generates invalid structures. Additional quantitative results in Section~\ref{subsec:QM9} further confirm the significant errors associated with 1-NFE MeanFlow generation.

The above numerical results motivate us to study the following problem:

\begin{center}
\emph{Can we improve the performance of 1-NFE MeanFlows for multimodal generation?}
\end{center}

\begin{figure}[!ht]
\centering
\includegraphics[width=0.95\columnwidth]{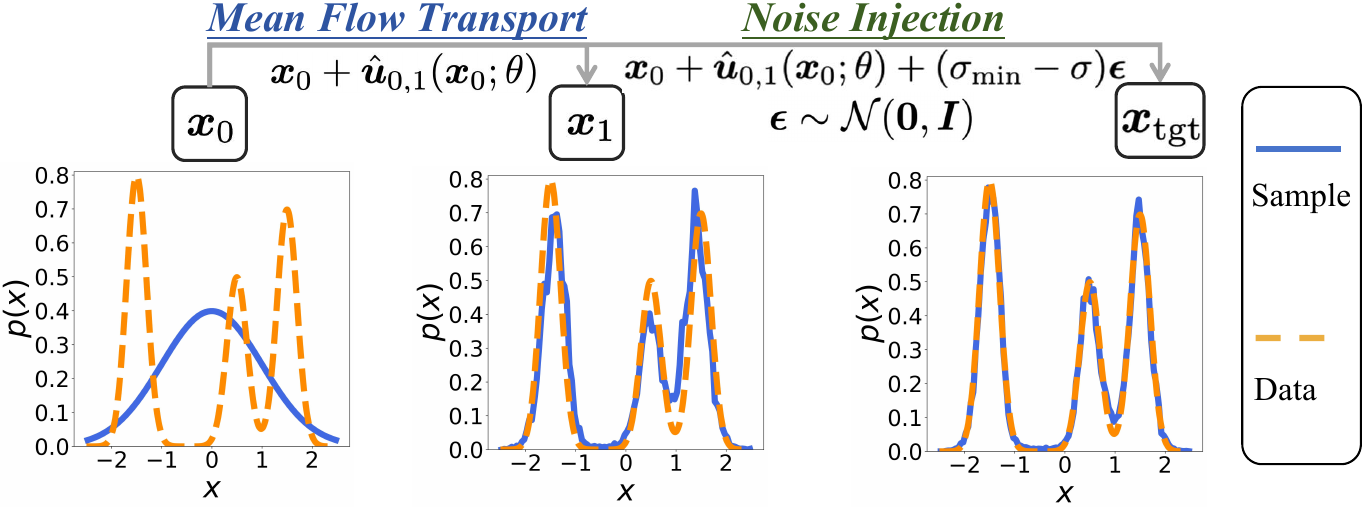}\\ \vspace{-0.2cm}
\caption{
Schematic of our proposed RMFlow: it first applies 1-NFE MeanFlow transport, then refines the result by a subsequent noise-injection step; see Section~\ref{sec:model-design}. The average velocity $\hat\vu_{0,1}(\vx_0;\theta)$ of RMFlow is trained by incorporating the maximum likelihood objective into the MeanFlow framework, as in~\eqref{eq:RMFlow-objective}.}
\label{fig:rmflow_instruction}\vspace{-0.15cm}
\end{figure}

\subsection{Our Contributions}
We propose RMFlow---an improved 1-NFE MeanFlow model for multimodal generation. RMFlow leverages 1-NFE MeanFlow for coarse transport, accompanied by a subsequent tailored noise-injection step to refine the generation; Fig.~\ref{fig:rmflow_instruction} depicts the idea of RMFlow. The results in Figs~\ref{fig:rmflow_mixG} and \ref{fig:Contrasting_molecules} demonstrate that RMFlow achieves a substantial improvement in generation quality over 1-NFE MeanFlow. In particular, it effectively mitigates invalid structures, producing coherent and valid molecular graphs. Our key contributions are:
\begin{itemize}[leftmargin=3mm]
\item We propose RMFlow to enable 1-NFE  high-fidelity multimodal generation by integrating the guidance encoding with a tailored noise-injection refinement strategy; see Section~\ref{sec:model-design}.

\item 
We design a theoretically principled training objective for RMFlow that balances minimizing the Wasserstein distance between probability paths and maximizing the likelihood of the learned target distribution; see Section~\ref{sec:Model-training}.

\item We show the compelling, often (near) state-of-the-art, results of RMFlow for benchmark text-to-image, text-to-structure, and time series generation (see Section~\ref{sec:experiments}).
\end{itemize}

\subsection{Additional Related Works}
To our knowledge, this work is the first to improve MeanFlows by introducing a noise-injection refinement for 1-NFE generation. This differs from existing couplings of flow and diffusion models, such as Diff2Flow \citep{schusterbauer2025diff2flow}, which transfers knowledge from pretrained diffusion models to flow matching models, and generator matching \citep{patel2024exploring}, which connects diffusion and flow matching under Markov generative processes.

Another line of work studies error control in FM. Prior analyses of probability flow ODEs and FM \citep{song2021maximum, lu2022maximum, lai2023fp, albergo2023stochastic} show that FM alone cannot guarantee likelihood maximization or KL divergence minimization between target and learned distributions.


\subsection{Organization}
We organize the rest of this paper as follows: We provide necessary background materials in Section~\ref{sec:background}. We present our proposed 1-NFE RMFlow in Section~\ref{sec:model-design}. We present our training loss function for RMFlow in Section~\ref{sec:Model-training}. Our numerical results for RMFlow in Section~\ref{sec:experiments}. Technical proofs and additional experimental details and results are provided in the appendix.

\section{Background}\label{sec:background}
In this section, we provide a brief review of flow-based generative models, especially MeanFlows. For a detailed exploration of FM, we refer the reader to \citep{lipman2023flow, liu2023flow, fukumizu2024flow}. 
For a given data $\vx_1 = \vx_{\rm data}\sim p$ and a prior sample $\vx_0\sim q$ (e.g., standard Gaussian $\gN({\bf 0},\mI)$), a (conditional) flow path---connecting the two samples---can be constructed as $\vx_t=a_t\vx_1+b_t\vx_0$ with $a_t$ and $b_t$ being predefined schedules. 
A common choice is $a_t = 1 - t$ and $b_t = t$, which corresponds to rectified flow \citep{liu2023flow}. 
This interpolation can be equivalently expressed as the solution to the ODE \(\dot{\vx}_t = \vu_t(\vx_t | \vz)\), where \(\vz = (\vx_0, \vx_1)\) denotes the coupling of start and end points, and \(\vu_t(\vx_t | \vz) = \dot{a}_t \vx_1 + \dot{b}_t \vx_0\) is the conditional vector field. 
FM learns an unconditional vector field \(\vu_t(\vx) := \mathbb{E}_{\vz} \left[ \vu_t(\vx | \vz)| \vx_t = \vx \right]\), 
which does not require knowledge of the pair \(\vz = (\vx_0, \vx_1)\). This is achieved by training a neural network \(\hat{\vu}_t(\vx; \theta)\) to minimize the objective:
\begin{equation}
\label{eq:CFM}
\mathcal{L}_{\text{CFM}}(\theta) := \mathbb{E}_{t, \vz} \left[ \left\| \hat{\vu}_t(\vx_t; \theta) - \vu_t(\vx_t | \vz) \right\|^2 \right].
\end{equation}
After training, we generate data by integrating $\frac{d\bm \vx_t}{dt} = \hat\vu_t(\vx_t;\theta)$ from $t=0$ to $1$, with $\vx_0\sim q$. 

Although FM is conceptually simple, sample generation requires multiple evaluations of \(\hat{\vu}_t(\vx_t; \theta)\), which can be computationally intensive. To address this inefficiency issue, MeanFlow learns an averaged velocity field based on the instantaneous velocity field \(\vu_t(\vx_t)\), defined as:
\begin{equation}\label{eq:mean-vector-field}
\vu_{t,r}(\vx_t) := \frac{\vx_r-\vx_t}{r-t} = \frac{1}{r-t}\int_t^r\vu_{s}(\vx_s)ds.
\end{equation}
This allows data generation by transporting \(\vx_t\) to \(\vx_r\) using the approximate average velocity field \(\hat{\vu}_{t,r}\):
 \begin{equation}\label{eq:MFM-generation}
\vx_r=\vx_t+(r-t)\hat{\vu}_{t,r}(\vx_t;\theta).
\end{equation}
In particular, 1-NFE generation corresponds to $\vx_1=\vx_0+\hat\vu_{0,1}(\vx_0;\theta)$. For Multi-NFE generation, $\hat\vu_{t,r}(\vx;\theta)$ is evaluated sequentially on a chosen grid $0=\tau_0<\cdots<\tau_n=1$ and is applied between consecutive grid points to transport samples. This approach achieves high-fidelity generation with significantly fewer NFEs compared to FM models that rely on the instantaneous velocity field \(\vu_t(\vx)\).

In MeanFlows, the mean velocity field $\vu_{t,r}(\vx)$ is approximated by a neural network $\hat{\vu}_{t,r}(\vx;\theta)$, with the weights $\theta$ being calibrated by minimizing the following conditional mean flow matching $\gL_{\rm CMFM}$ loss function:
\begin{equation}\label{eq:mean-flow-matching}
\begin{aligned}
&\gL_{\rm CMFM}(\theta) := \mathbb{E}_{t,r,\vz}\big[ \|\hat{\vu}_{t,r}(\vx;\theta)-\texttt{sg}\left(\vu^{\rm tgt}_{t,r}(\vx;\theta)\right) \|^2 \big],
\end{aligned}
\end{equation}
where $0\leq t\leq r\leq 1$ are uniform samples from the interval $[0,1]$ and $\vu^{\rm tgt}_{t,r}$ is the target defined as:
$$
\vu_{t,r}^{\rm tgt}(\vx;\theta) := \vu_t(\vx|\vz) + (r-t)\big[ \nabla\hat{\vu}_{t,r}(\vx;\theta)\cdot\vu_t(\vx|\vz) + \partial_t\hat{\vu}_{t,r}(\vx;\theta) \big],
$$
with $\texttt{sg}$ denoting a stop-gradient operation. This stop-gradient approach prevents higher-order optimization while ensuring that zero loss guarantees dynamical consistency. The target velocity \(\vu_{t,r}^{\rm tgt}\) is efficiently computed using Jacobian-vector products (\texttt{jvp}) in autodiff libraries such as \texttt{PyTorch} \citep{paszke2019pytorch} or \texttt{JAX} \citep{bradbury2018jax}.

\section{The Design of RMFlow}\label{sec:model-design}
In this section, we describe the design of 1-NFE RMFlow for high-fidelity generation, with or without multimodal guidance.

\subsection{Motivation}
In practice, the true data distribution $\vx_{\rm data}$ is unavailable due to its complexity and the limited nature of observed data. Following the standard practice in the field, as established in works such as \citep{ho2020denoising, song2021maximum, lu2022maximum, lipman2023flow}, we approximate it with a noisy, smoothed version $\vx_{\rm tgt} = \vx_{\rm data} + \sigma_{\min} \boldsymbol{\epsilon}\sim p_{\rm tgt}$, where $\boldsymbol{\epsilon} \sim \mathcal{N}(\mathbf{0}, \mI)$ and $\sigma_{\min}$ is small (e.g., $10^{-3}$). This approach ensures stability and robust learning of the data distribution.

MeanFlows learn a neural network, by minimizing $\gL_{\rm CMFM}$ in \eqref{eq:mean-flow-matching}, to transport a prior sample $\vx_0$ directly to the noisy target, i.e., $\vx_1 = \vx_{\rm tgt}$. \cite{boffi2024flow, boffi2025build} showed that this approach reduces the Wasserstein distance between the target distribution $p_{\rm tgt}$ and the learned distribution $p_\theta$:
\begin{restatable}{theorem}{thwasser}\label{th:wasser}[\cite{boffi2025build}]
There exists a constant $M > 0$ such that:
\begin{equation}
    M \cdot \gL_{\rm CMFM}(\theta) \geq W^2_2(p_{\rm tgt}, p_\theta):= \inf_{\gamma \in \Pi(p_{\rm tgt}, p_\theta)} \mathbb{E}_{(x,y)\sim \gamma} \big[\|x - y\|^2\big],  
\end{equation}
where $M$ is a constant, $ W^2_2(p_{\rm tgt}, p_\theta)$ denotes the Wasserstein distance between $p_{\rm tgt}$ and $p_\theta$, and $\Pi(p_{\rm tgt}, p_\theta)$ is the set of all joint distributions with marginals $p_{\rm tgt}$ and $p_\theta$.
\end{restatable}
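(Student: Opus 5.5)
We plan to prove Theorem~\ref{th:wasser} by controlling the error of the one-step map $\vx_0\mapsto \vx_0+\hat\vu_{0,1}(\vx_0;\theta)$ along the true flow, and then using the natural coupling between the true and learned endpoints. Let $X_{t,r}$ denote the exact flow map of the instantaneous field $\vu_t$, so that $X_{0,1}$ pushes $q$ to $p_{\rm tgt}$. Let $\hat X_{t,r}(\vx)=\vx+(r-t)\hat\vu_{t,r}(\vx;\theta)$, so that $p_\theta=(\hat X_{0,1})_\#q$. Coupling $X_{0,1}(\vx_0)$ with $\hat X_{0,1}(\vx_0)$ through the same $\vx_0\sim q$ gives
$$
W_2^2(p_{\rm tgt},p_\theta)\le \E_{\vx_0\sim q}\|X_{0,1}(\vx_0)-\hat X_{0,1}(\vx_0)\|^2 .
$$
It then remains to bound the right-hand side by a constant times $\gL_{\rm CMFM}$.

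The key identity is the MeanFlow (Eulerian) characterization. The exact mean velocity $\vu_{t,r}$ satisfies
$$
\vu_{t,r}(\vx)=\vu_t(\vx)+(r-t)\big[\nabla\vu_{t,r}(\vx)\,\vu_t(\vx)+\partial_t\vu_{t,r}(\vx)\big].
$$
This follows by differentiating $(r-t)\vu_{t,r}(X_{0,t}(\vx_0))=X_{0,r}(\vx_0)-X_{0,t}(\vx_0)$ in $t$. We first replace the conditional target by the marginal one. Because $\vu_{t,r}^{\rm tgt}$ is affine in $\vu_t(\vx|\vz)$ and $\E[\vu_t(\vx|\vz)\mid\vx_t=\vx]=\vu_t(\vx)$, the standard bias--variance decomposition gives $\gL_{\rm CMFM}=\gL_{\rm MFM}+C$. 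Here $\gL_{\rm MFM}$ uses the marginal target and $C\ge0$ does not depend on $\theta$ (it is evaluated at the stop-gradient point). Hence $\gL_{\rm MFM}\le\gL_{\rm CMFM}$, and it suffices to work with the marginal residual $\delta_{t,r}(\vx)=\hat\vu_{t,r}-\vu_t-(r-t)[\nabla\hat\vu_{t,r}\vu_t+\partial_t\hat\vu_{t,r}]$.

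Next, set $e(t)=\hat X_{t,1}(X_{0,t}(\vx_0))-X_{0,1}(\vx_0)$. At $t=1$ we have $e(1)=0$, and at $t=0$, $e(0)$ is exactly the endpoint error we need. Differentiating along the true trajectory,
$$
\tfrac{d}{dt}\hat X_{t,1}(X_{0,t})=\vu_t-\hat\vu_{t,1}+(1-t)\big[\partial_t\hat\vu_{t,1}+\nabla\hat\vu_{t,1}\vu_t\big]=-\delta_{t,1}(X_{0,t}).
$$
So $e(0)=\int_0^1\delta_{t,1}(X_{0,t}(\vx_0))\,dt$. Cauchy--Schwarz then gives $\E\|e(0)\|^2\le\int_0^1\E_{\vx\sim p_t}\|\delta_{t,1}(\vx)\|^2dt$, where $p_t$ is the law of $\vx_t$.

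The main obstacle is that the loss averages over pairs $(t,r)$ with $r$ random, while the bound above uses only $r=1$. We would handle this with a flow-map consistency argument in the style of \cite{boffi2025build}. For general $r$ the same computation gives $\hat X_{t,r}(X_{0,t})-X_{0,r}=\int_t^r\delta_{s,r}\,ds$. If the $(t,r)$ sampling density is bounded below on $\{t\le r\}$, say by $\rho>0$, then $\int_0^1\!\!\int_t^1\E\|\delta_{t,r}\|^2\,dr\,dt\le\rho^{-1}\gL_{\rm MFM}$. If instead the sampler places positive mass on $r=1$, the $r=1$ slice is bounded directly. Otherwise we would pass from almost-every $r$ to $r=1$ using a Lipschitz-in-$r$ assumption on $\hat\vu$ and $\vu$, which is the origin of the constant $M$. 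This yields $M=\rho^{-1}$, possibly times a Lipschitz factor. Combining the steps gives $W_2^2(p_{\rm tgt},p_\theta)\le M\,\gL_{\rm MFM}\le M\,\gL_{\rm CMFM}$.
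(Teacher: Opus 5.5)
The paper gives no proof of this statement (it is quoted from the flow-map literature), so the question is whether your argument closes on its own. Most of it is correct: the synchronous coupling, the MeanFlow identity, the reduction $\gL_{\rm MFM}\le\gL_{\rm CMFM}$, and the Eulerian representation $\hat X_{0,1}(\vx_0)-X_{0,1}(\vx_0)=\int_0^1\delta_{t,1}(X_{0,t}(\vx_0))\,dt$. With Cauchy--Schwarz the last one gives $W_2^2(p_{\rm tgt},p_\theta)\le\int_0^1\E\|\delta_{t,1}(\vx_t)\|^2\,dt$. One small correction: the variance term $\E\|(\mI+(r-t)\nabla\hat\vu_{t,r})(\vu_t(\vx|\vz)-\vu_t(\vx))\|^2$ does depend on $\theta$ through $\nabla\hat\vu_{t,r}$. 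The stop-gradient removes its gradient, not its dependence on $\theta$. You only use that it is nonnegative, so nothing breaks.

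The gap is the final step, from the loss averaged over $(t,r)$ to the slice $r=1$, and it cannot be closed the way you propose. Under the uniform law on $\{0\le t\le r\le 1\}$ used in \eqref{eq:mean-flow-matching}, that slice is a null set. The appendix sampler's only atom sits on the diagonal $r=t$, which controls $\hat\vu_{t,t}$, not the one-step map. Your density-bounded-below step only yields $\int_0^1\E\|\hat X_{0,r}(\vx_0)-X_{0,r}(\vx_0)\|^2\,dr\le\rho^{-1}\gL_{\rm MFM}$, which is an average over endpoints rather than the endpoint $r=1$. A Lipschitz-in-$r$ assumption does not turn this into a multiplicative constant. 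Writing $\bm e_r=\hat X_{0,r}(\vx_0)-X_{0,r}(\vx_0)$, it gives $\|\bm e_1\|\le\|\bm e_r\|+L(1-r)$, an additive error. Optimizing the averaging window then yields at best a bound of order $L^{2/3}\gL_{\rm MFM}^{2/3}$.

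No linear bound is possible, as the following example shows. Take $\hat\vu_{t,r}=\vu_{t,r}+\phi(r)\bm a$ with $\phi(r)=\max\{0,1-(1-r)/\varepsilon\}$ and $\|\bm a\|=L\varepsilon$. The perturbation is $L$-Lipschitz in $r$ for every $\varepsilon$, and it leaves $\nabla\hat\vu_{t,r}$ and $\partial_t\hat\vu_{t,r}$ equal to those of the exact $\vu_{t,r}$. By the MeanFlow identity, $\delta_{t,r}=\phi(r)\bm a$, hence $\gL_{\rm MFM}\le 2\varepsilon\|\bm a\|^2$. Yet $\hat X_{0,1}=X_{0,1}+\bm a$, so $p_\theta$ is $p_{\rm tgt}$ translated by $\bm a$ and $W_2^2(p_{\rm tgt},p_\theta)=\|\bm a\|^2$. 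The ratio $W_2^2/\gL_{\rm MFM}$ is therefore at least $1/(2\varepsilon)$, which is unbounded while every regularity constant of $\hat\vu$ stays fixed. So your intermediate claim $W_2^2\le M\,\gL_{\rm MFM}$, with $M$ coming from a Lipschitz factor, is false.

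What closes your argument is a hypothesis under which the loss actually sees the slice $r=1$. One option is that the $(t,r)$-law has an atom of mass $\pi_1>0$ on $\{r=1\}$ whose conditional $t$-density is at least $\rho_1$ on $[0,1]$; this gives $M=(\pi_1\rho_1)^{-1}$. The other is to read the loss in the theorem as its $r=1$ slice, $\int_0^1\E\|\hat\vu_{t,1}(\vx_t;\theta)-\vu^{\rm tgt}_{t,1}(\vx_t;\theta)\|^2\,dt$, which gives $M=1$. With either hypothesis your proof is complete. The version with the averaged loss and a Lipschitz-derived constant cannot be reached by this route.
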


While controlling the Wasserstein distance provides a meaningful measure of distributional alignment, empirical evidence indicates that FM enforces additional constraints, such as 
KL divergence \citep{lu2022maximum}, often achieves superior generative performance over the FM baseline. With this in mind, we aim to enhance the fidelity of 1-NFE MeanFlows, pushing beyond current limitations in a manner analogous to improvements seen in FMs.

\subsection{Noise Injection Refinement}
We decompose the generation process into two stages. In the first stage, a 1-NFE MeanFlow transports the prior $\vx_0$ to an intermediate noisy state
\begin{equation}
\vx_1 = 
\vx_{\rm data}
+ \sigma \epsilon_1, \quad \text{with } \epsilon_1 \sim \mathcal{N}(\mathbf{0}, \mI) \text{ and } \sigma < \sigma_{\min}. 
\label{eq:inter_noisy_state}
\end{equation}
In the second stage, a \emph{single noise injection step} is applied:
\begin{equation}
\vx_{\rm tgt} = \vx_1 + \sqrt{\sigma_{\min}^2 - \sigma^2}\cdot\epsilon_2, \quad \epsilon_2 \sim \mathcal{N}(\mathbf{0}, \mI),
\label{eq:inj_noisy_state}
\end{equation}
to generate the final sample. This additional noise injection aligns with the designs of VAEs~\citep{kingma2013auto}, allowing principled likelihood maximization via a loss term derived from the evidence lower bound (ELBO) \citep{wainwright2008graphical} to optimize the MeanFlow parameters. We will prove in Theorem~\ref{thm:KL} that this formulation enables control over the KL divergence between the target distribution $p_{\rm tgt}$ and the learned distribution $p_\theta$.

In summary, {\bf our data generation process} is defined as
\begin{equation}\label{eq:RMFlow-generation}
\boxed{
\hat{\vx}_{\rm tgt}= \vx_0 + \hat{\vu}_{0,1}(\vx_0;\theta) + \sqrt{\sigma_{\rm min}^2-\sigma^2}\cdot\epsilon_2, \quad \epsilon_2 \sim \mathcal{N}(\mathbf{0}, \mI),
}
\end{equation}
where $\hat{\vu}_{0,1}(\vx_0;\theta)$ denotes the learned average velocity field. Although RMFlow is conceptually a two-stage framework, \eqref{eq:RMFlow-generation} demonstrates that generation is performed in a single step: the learned flow is evaluated once (1-NFE), and a noise term is added in parallel to produce the output.

\subsection{Multimodality}
To support cross-modality generation, we incorporate an encoder $\phi_\omega(\vc)$ that embeds conditioning signals (e.g., text prompts). The prior samples for both guided (potentially multimodal) and unguided generation are defined as
\[
\vx_0 = 
\begin{cases}
\phi_\omega(\vc) + \sigma_c \epsilon, & \text{for guided generation},\\[1ex]
\epsilon, & \text{for unguided generation},
\end{cases}
\]
where $\epsilon \sim \mathcal{N}(\mathbf{0}, \mI)$. This design allows the flow to incorporate multimodal guidance if available, while defaulting to unconditional generation otherwise. Here, $\phi_\omega(\cdot)$ is an encoder chosen following common practice (see Section~\ref{sec:experiments}), and $\sigma_c \ll 1$ (e.g., $10^{-3}$) is pre-chosen to control perturbations.  

Specifically, for a given data pair $(\vx_{\rm data}, \vc)$, we train the MeanFlow to transport the prior sample $\vx_0 = \phi_\omega(\vc) + \sigma_c \epsilon$ to the intermediate target $\vx_1 = \vx_{\rm data} + \sigma \epsilon_1$, where $\epsilon_1 \sim \mathcal{N}(\mathbf{0}, \mI)$. 
{\it The encoder and MeanFlow are optimized jointly}, and we will discuss the training objective in Section~\ref{sec:Model-training}.

\begin{remark}
Our proposed RMFlow differs from MeanFlow in two aspects:
(1) We apply a tailored encoder to the guidance,  (2) we add a noise injection step to refine the generation result.
\end{remark}

\section{The Training of RMFlow}\label{sec:Model-training}
In this section, we present the training procedure for RMFlow.
We first establish the theoretical foundation of noise-injection refinement, showing that it enables likelihood maximization of the learned distribution with respect to the target distribution. Building on this, we introduce a joint training objective that combines $\gL_{\rm CMFM}$ (for Wasserstein control) with likelihood maximization and optional guidance regularization, ensuring both fidelity and flexibility in guided generation. Finally, we adopt parameter-efficient fine-tuning (PEFT; cf. \citep{hu2022lora,dettmers2023qlora}) to implement RMFlow for large-scale tasks.

\subsection{Likelihood Maximization}\label{subsubsec:likelihood-maximization}
In this section, we show that the noise-injection step in \eqref{eq:RMFlow-generation} enables likelihood maximization during RMFlow training. Specifically, for a given prior sample $\vx_0$, the intermediate sample generated by the MeanFlow is
\[
\vx_1 = \vx_0 + \hat{\vu}_{0,1}(\vx_0;\theta).
\]  
By \eqref{eq:RMFlow-generation}, the conditional distribution of the final generated sample given the prior is
\[
\hat\vx_{\rm tgt} \mid \vx_0 \sim \mathcal{N}\Big( \vx_0 + \hat{\vu}_{0,1}(\vx_0; \theta), (\sigma_{\min}^2-\sigma^2) \mI \Big).
\]
This specifies a parametric conditional distribution. Given an observed target $\vx_{\rm tgt}$, the corresponding conditional log-likelihood is
\begin{equation}\label{eq:log-likelihood}
\log p_\theta(\vx_{\rm tgt} \mid \vx_0) = -\frac{1}{2 ((\sigma_{\min}^2-\sigma^2))} \big\| \vx_{\rm tgt}  - \big( \vx_0 + \hat{\vu}_{0,1}(\vx_0; \theta) \big) \big\|^2 + C,
\end{equation}
where $C = - \frac{d}{2} \log (2 \pi (\sigma_{\min}^2-\sigma^2))$ and $d$ is the dimensionality of the data. Notice that~\eqref{eq:inter_noisy_state} and~\eqref{eq:inj_noisy_state} indicate the observed target $\vx_{\rm tgt}=\vx_{\rm data} + \sigma_{\rm min}\epsilon$, where $\epsilon\sim\gN(\bm{0},\mI)$. 

Therefore, we define the following loss term to maximize the likelihood:
\begin{equation}
\gL_{\rm NLL} := \mathbb{E}_{\vx_0, \vx_{\rm data}, \epsilon}\Big[ \big\| (\vx_{\rm data} + \sigma_{\min}\epsilon) - (\vx_0 + \hat{\vu}_{0,1}(\vx_0; \theta)) \big\|^2 \Big].
\end{equation}

The following theorem formalizes the theoretical guarantee of the noise-injection refinement. In particular, it demonstrates that minimizing the loss $\mathcal{L}_{\rm NLL}$ maximizes the expected log-likelihood, thereby reducing the KL divergence between the target and learned distributions.
\begin{restatable}{theorem}{thmKL}\label{thm:KL}
The negative log-likelihood loss $\mathcal{L}_{\rm NLL}$ provides a lower bound on the expected log-likelihood of the target distribution:
\begin{equation}
-A\cdot \mathcal{L}_{\rm NLL}  + C \leq \mathbb{E}_{\vx_{\rm tgt}}[\log p_\theta(\vx_{\rm tgt})] = -H(p_{\rm tgt}) - D_{\rm KL}(p_{\rm tgt} \,\|\, p_\theta),
\end{equation}
where $H(p_{\rm tgt}):=  -\mathbb{E}_{\vx_{\rm tgt}}[\log p_{\rm tgt}] $ is the entropy of $p_{\rm tgt}$, $D_{\rm KL}(p_{\rm tgt}|| p_\theta):=  \mathbb{E}_{\vx_{\rm tgt}}[\log \frac{p_{\rm tgt}}{p_\theta}]$ denotes the KL divergence between the target and the learned distributions, and $A, B>0$ are constants.\end{restatable}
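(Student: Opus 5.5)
The plan is a standard ELBO / Jensen argument in two parts. First, write the marginal model density as the mixture
\[
p_\theta(\vx_{\rm tgt}) = \E_{\vx_0\sim q}\big[p_\theta(\vx_{\rm tgt}\mid \vx_0)\big],
\]
where $p_\theta(\cdot\mid\vx_0)$ is the Gaussian $\gN\big(\vx_0+\hat\vu_{0,1}(\vx_0;\theta),(\sigma_{\min}^2-\sigma^2)\mI\big)$ given in \eqref{eq:log-likelihood}. Because $\log$ is concave, Jensen's inequality gives, for every fixed $\vx_{\rm tgt}$,
\[
\log p_\theta(\vx_{\rm tgt}) \ge \E_{\vx_0\sim q}\big[\log p_\theta(\vx_{\rm tgt}\mid\vx_0)\big].
\]
This is the ELBO with the prior itself used as the variational distribution.

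Next, insert the explicit Gaussian log-density from \eqref{eq:log-likelihood}. Then take the expectation over $\vx_{\rm tgt}=\vx_{\rm data}+\sigma_{\min}\epsilon\sim p_{\rm tgt}$, with $\vx_0$ drawn independently of $(\vx_{\rm data},\epsilon)$. This gives
\[
\E_{\vx_{\rm tgt}}[\log p_\theta(\vx_{\rm tgt})] \ge -\frac{1}{2(\sigma_{\min}^2-\sigma^2)}\,\gL_{\rm NLL} + C .
\]
The right-hand side matches the claimed form with $A=1/(2(\sigma_{\min}^2-\sigma^2))>0$, which is positive since $\sigma<\sigma_{\min}$. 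The constant is $C=-\tfrac d2\log(2\pi(\sigma_{\min}^2-\sigma^2))$. The statement also mentions a constant $B$, but no $B$ appears in the inequality, so I would note that $B$ is not needed beyond $C$. One caveat: if the training coupling pairs $\vx_0$ with $\vx_{\rm data}$ (for example through the guidance encoder $\phi_\omega(\vc)$), the expectation should be read as conditional on the context $\vc$. In that case Jensen is applied to the conditional mixture over $\epsilon$ for each fixed $\vc$, and the result is averaged over $\vc$ afterwards.

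The equality part is the identity
\[
\E_{p_{\rm tgt}}[\log p_\theta] = \E_{p_{\rm tgt}}[\log p_{\rm tgt}] - \E_{p_{\rm tgt}}\Big[\log\frac{p_{\rm tgt}}{p_\theta}\Big],
\]
which equals $-H(p_{\rm tgt})-D_{\rm KL}(p_{\rm tgt}\|p_\theta)$. It requires that both quantities are finite. This holds because $p_{\rm tgt}$ is a Gaussian smoothing of the data distribution, so it has a bounded, smooth density with finite entropy whenever the data has finite second moment. Likewise $p_\theta$ is a Gaussian mixture with strictly positive density.

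The main technical point, mild as it is, is justifying the exchange of expectations and integrability, so that $\gL_{\rm NLL}<\infty$ and all terms are finite. I would assume finite second moments of the data and of $\hat\vu_{0,1}(\vx_0;\theta)$, and then use Fubini/Tonelli on the nonnegative squared-norm term. With these, the proof is a short calculation.
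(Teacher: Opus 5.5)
Your proof is correct and is essentially the paper's argument: Jensen's inequality applied to the prior mixture $p_\theta(\vx_{\rm tgt})=\E_{\vx_0}[p_\theta(\vx_{\rm tgt}\mid\vx_0)]$, substitution of the Gaussian log-density to obtain $-A\,\gL_{\rm NLL}+C$, and the identity $\E_{p_{\rm tgt}}[\log p_\theta]=-H(p_{\rm tgt})-D_{\rm KL}(p_{\rm tgt}\,\|\,p_\theta)$. Your constant $A=1/(2(\sigma_{\min}^2-\sigma^2))$ is the right one; the paper's proof writes $(\sigma_{\min}-\sigma)^2$, which is inconsistent with its own Gaussian log-likelihood. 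Your remarks on reading the bound conditionally on $\vc$ when $\vx_0$ is coupled to the data, and on integrability, are valid refinements that the paper leaves implicit.
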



\subsubsection{Joint Training Objective}
\label{sec:joint-objective}
RMFlow is trained by jointly optimizing the original MeanFlow loss (Wasserstein control) and likelihood maximization, resulting in the following objective function:
\begin{equation}\label{eq:RMFlow-objective}
\begin{aligned}
\gL_{\rm RMFlow}(\theta,\omega) &= \underbrace{\gL_{\rm CMFM}}_{\text I} +\underbrace{\lambda_1 \gL_{\rm NLL}}_{\text{II}}
+ \underbrace{\lambda_2 \mathbb{E}_{(\vx_{\rm data},\vc)}[\|\phi_\omega(\vc)\|^2] }_{\text{III}},
\end{aligned}
\end{equation}
where $\lambda_1,\lambda_2\geq 0$ are two hyperparameters. We remark that Term I controls the gap between the probability flows of the exact and approximated mean velocities in intermediate states, Term II  for likelihood maximization, and Term III is designed for guided generation and is set to 0 for unguided generation. Here, the expectation in III is taken over all data-guidance pairs $(\vx_{\rm data},\vc)$.

\begin{remark}
Term III in \eqref{eq:RMFlow-objective} can be considered as a regularization on the prior distribution, and a similar term is used in training VAE \citep{kingma2013auto}. Empirically, we observe that term III can be very large, resulting in substantial performance degradation.
\end{remark}

\subsection{Memory-Efficient Fine-Tuning}\label{subsec:memory-efficient-fine-tuning}
For relatively small-scale tasks, we train our RMFlow by directly minimizing $\gL_{\rm RMFlow}$. Compared to $\gL_{\rm CMFM}$, our new objective $\gL_{\rm RMFlow}$ introduces additional gradient pathways, increasing memory footprint. To balance efficiency and performance for large-scale tasks, we first train the MeanFlow model by minimizing $\gL_{\rm CMFM}$, and then fine-tune it using PEFT \citep{hu2022lora,dettmers2023qlora}, with $\gL_{\rm RMFlow}$ as a supervised objective in our large-scale experiments on text-to-image and molecule generation tasks. During fine-tuning, we further strengthen training by integrating 1-NFE sampling with a policy-gradient objective that incorporates physical feedback on sample quality for molecule generation tasks, as described in~\cite {zhou2025guiding}.

\section{Numerical Experiments}\label{sec:experiments}
In this section, we validate the efficacy and efficiency of RMFlow for both guided and unguided sample generation. We consider two synthetic tasks: sampling a 1D mixture Gaussian distribution and a 2D checkerboard density (Section~\ref{subsec:density-estimation}). We also consider several benchmark tasks, including context-to-molecular structure generation (Section~\ref{subsec:QM9}), sampling trajectories of dynamical systems (time series; Section~\ref{subsec:time-series}), and text-to-image generation (Section~\ref{subsec:text2image}).

\textbf{Software and Equipment.} 
We implement synthetic tasks, context-to-molecule generation, and text-to-image generation using \texttt{PyTorch}. We implement the time series generation task using \texttt{JAX}. Additionally, we use \texttt{Torch DDP} and \texttt{torch.compile} to optimize the model execution for context-to-molecule and text-to-image generation. All the experiments are carried out on multiple NVIDIA RTX 3090/4090 GPUs.

\textbf{Training Setups.} See Appendix~\ref{sec:experiment_setup} for the details of training setups.

\textbf{Evaluation Metrics:} For synthetic tasks and time-series generation, we evaluate performance using the estimated KL divergence and total variation (TV) distance between the generated samples and the ground-truth. Both KL and TV are computed from densities obtained via histogram-based estimation of the sample and ground-truth distributions. For molecule generation, we predict bond types from pairwise interatomic distances and atom types, and then compute atom and molecule stability, following~\cite{hoogeboom2022equivariant}. For the image generation task, we assess sample quality using the Fréchet Inception Distance (FID)~\citep{heusel2017gans}. 

We use NFE to measure generation efficiency following \citep{geng2025mean}. Notice that the Gaussian noise injection step takes negligible time compared to the neural network function evaluation.

\begin{table}[!ht]
\centering
\fontsize{8.5}{8.5}\selectfont
\begin{tabular}{lccc|c}
\toprule
&1-NFE MeanFlow& 8-NFE MeanFlow& 32-NFE MeanFlow 
& 1-NFE RMFlow ({\bf ours})\\
\midrule
TV & 1.4422 & 0.7977 & 0.6737 
& 0.7567  \\
KL & 0.8074 & 0.4074 & 0.1017 
& 0.2332 \\
\bottomrule
\end{tabular}
\vspace{-0.5em}
\caption{
Contrasting 1-NFE RMFlow with 1/8/32-NFE MeanFlow for mixture Gaussian sampling. 1-NFE RMFlow outperforms both 1- and 8-NFE MeanFlows, while slightly worse than 32-NFE MeanFlow.
}
\label{tb:mixG_tb}
\end{table}

\begin{table}[!ht]
\centering
{\fontsize{8.5}{8.5}\selectfont
\begin{tabular}{lccc|c}
\toprule
&\includegraphics[width=0.15\linewidth]{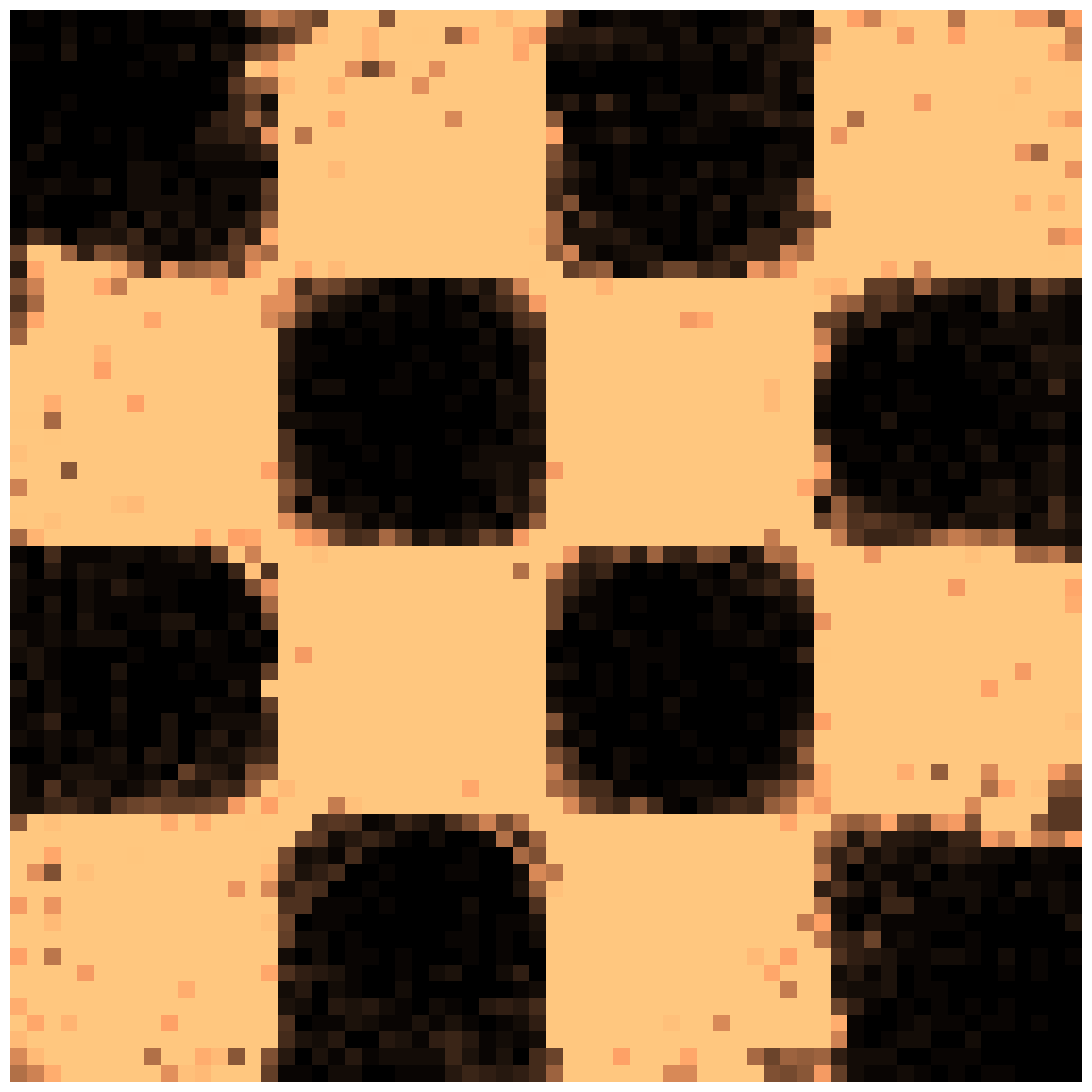}&
\includegraphics[width=0.15\linewidth]{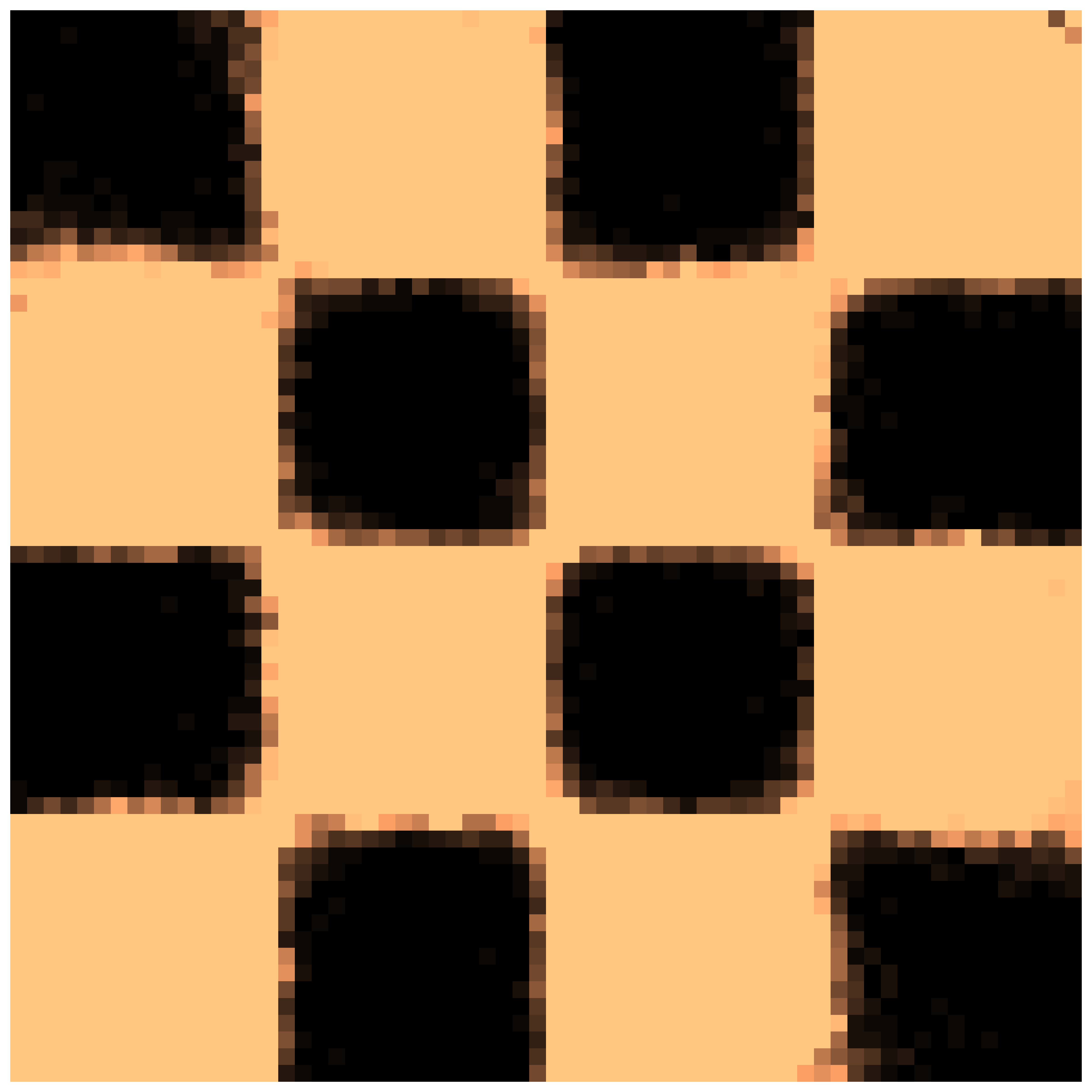}&
\includegraphics[width=0.15\linewidth]{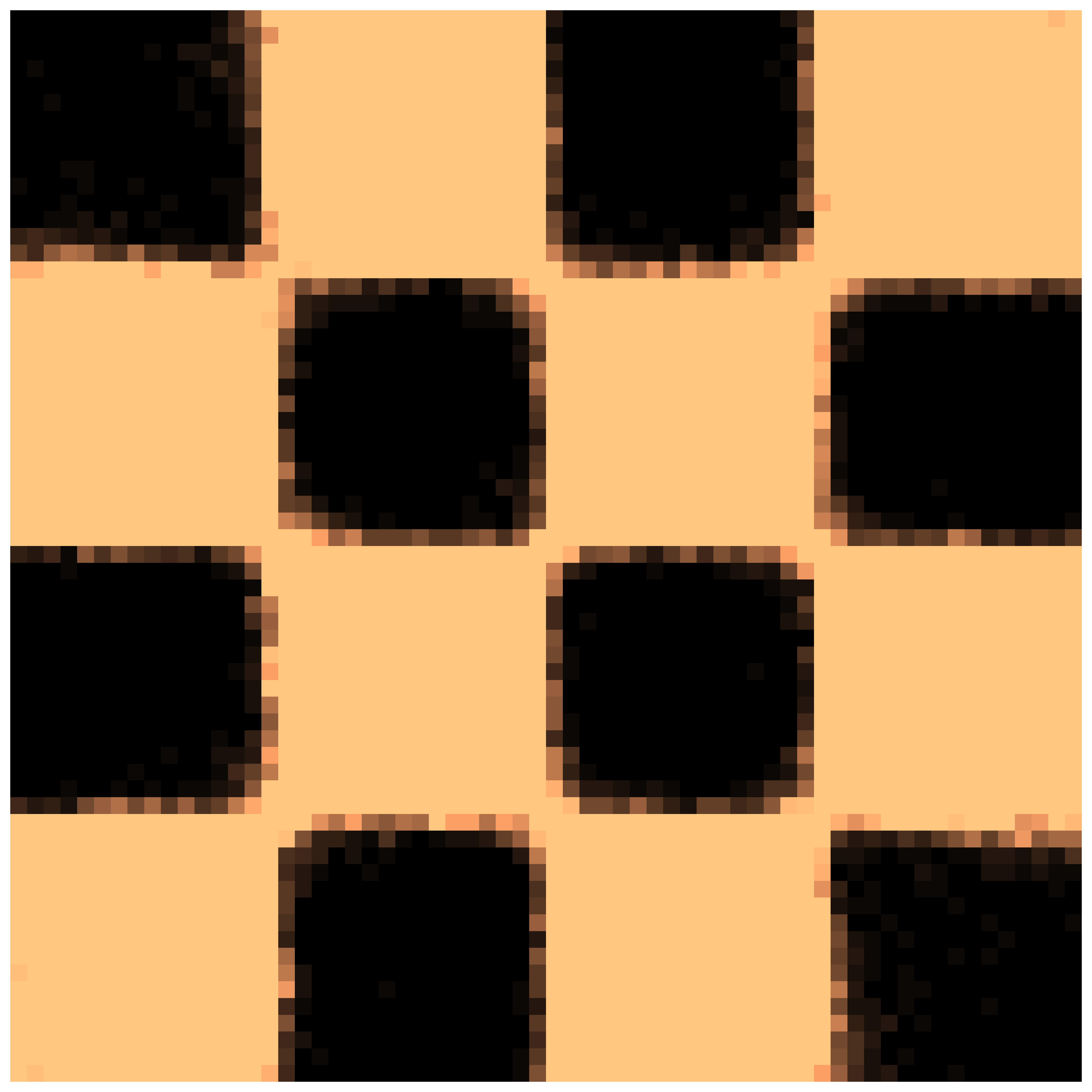}&
\includegraphics[width=0.15\linewidth]{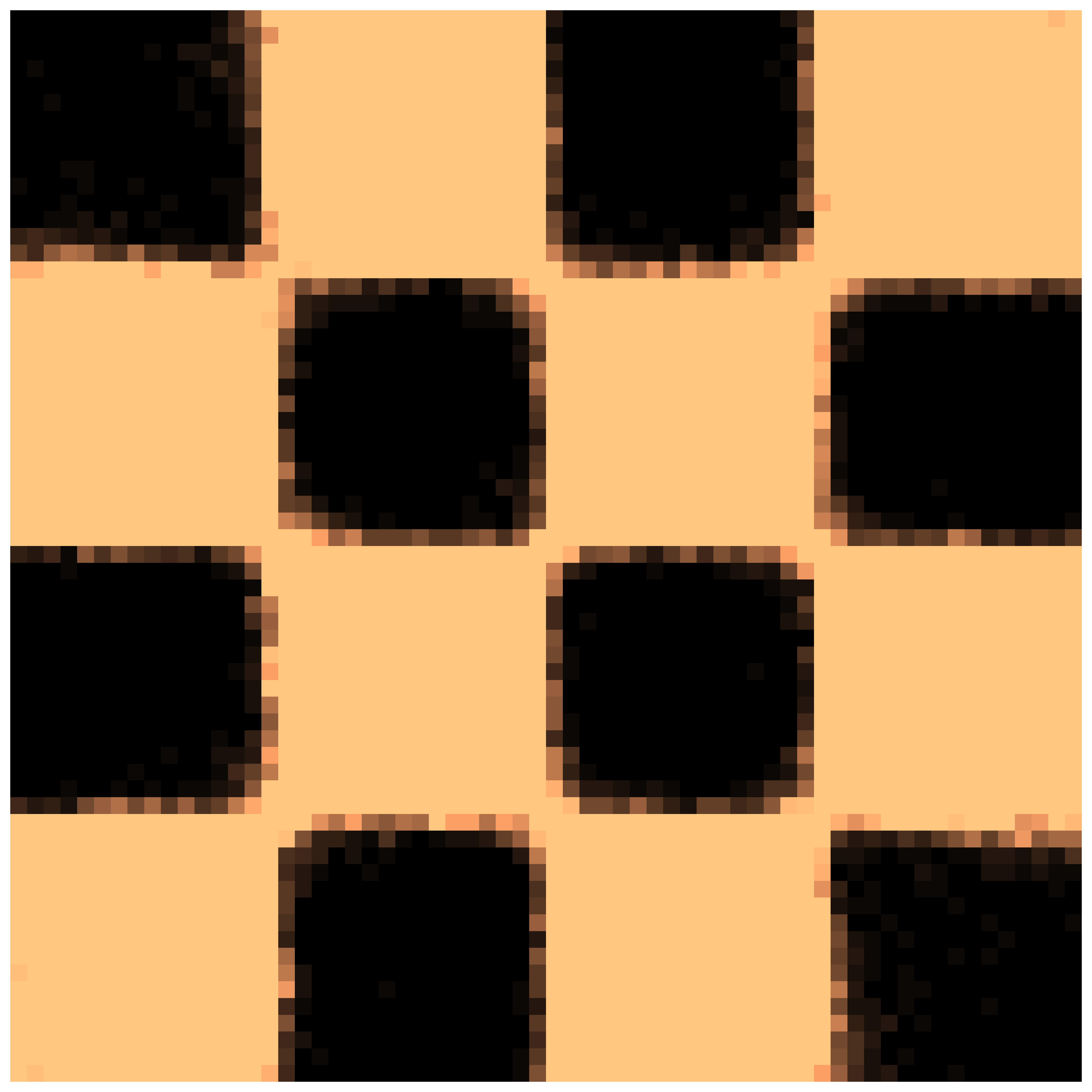}
\\
\midrule
&1-NFE MeanFlow& 8-NFE MeanFlow& 32-NFE MeanFlow
& 1-NFE RMFlow ({\bf ours})\\
\midrule
TV & 0.238& 0.167& 0.155
& 0.173\\
KL & 0.311& 0.139& 0.118
& 0.163\\
\bottomrule
\end{tabular}}
\vspace{-0.3em}
\caption{
Contrasting 1-NFE RMFlow with 1/8/32-NFE MeanFlow for checkerboard density sampling. 1-NFE RMFlow significantly outperforms 1-NFE MeanFlow, closing the performance gap to multi-NFE MeanFlow.
}
\label{tb:checkerboard_tb}
\end{table}

\subsection{Synthetic Tasks
}\label{subsec:density-estimation}
In this experiment, we train a simple ResNet-based model under both MeanFlow and RMFlow frameworks for $10^5$ iterations using a batch size of $256$ to sample (1) 1D Gaussian mixture $p_{\rm data} = 0.35\gN(1.5, 0.04)+0.25\gN(0.5, 0.04)+0.4\gN(-1.5, 0.04)$, and (2) 2D checkerboard where the probability density resembles a checkerboard pattern. 
We consider 1/8/32-NFE MeanFlow and 1-NFE RMFlow for sample generation.
Tables \ref{tb:mixG_tb} and \ref{tb:checkerboard_tb} show that 1-NFE RMFlow significantly outperforms 1-NFE MeanFlow, closing the performance gap to multi-NFE MeanFlow.



\subsection{Context-to-Molecule: QM9 Generation}\label{subsec:QM9}
We train MeanFlow and RMFlow for context-to-molecule generation on the QM9 dataset~\cite{ramakrishnan2014quantum}, a benchmark containing atomic coordinates and quantum-chemical properties for 130k small molecules with up to 9 heavy atoms (up to 29 atoms including hydrogens). Following~\cite{hoogeboom2022equivariant}, we perform condition generation on seven molecular properties: (1) number of atoms, (2) HOMO, (3) LUMO, (4) $\alpha$ (isotropic polarizability), (5) gap, (6) $\mu$ (dipole moment), and (7) $C_v$ (heat capacity). These properties are concatenated into a context vector and mapped to the data space using $\phi_{\omega}(\vc)$, parameterized by a single EGNN block~\cite{garcia2021n}.

Our model backbone follows the EGNN architecture in~\citep{garcia2021n,hoogeboom2022equivariant}, augmented with a time-embedding module for the additional scalar time variable $r$. In addition, molecule stability is used as the reward within the RL framework, following the approach of~\cite{zhou2025guiding},
to provide 
feedback during training (see Section~\ref{subsec:memory-efficient-fine-tuning} and Appendix~\ref{sec:rlpf_loss}). We adopt the train/val/test splits of~\cite{anderson2019cormorant}, comprising $100k/18k/13k$ molecules, respectively. Table~\ref{tb:qm9_results} shows that 1-NFE RMFlow attains state-of-the-art performance, whereas competing SOTA methods require $n$-NFE with $n\gg 1$.
Figure~\ref{fig:qm9} depicts a few randomly generated molecules and the corresponding contexts.

\begin{figure}[!ht]
\centering
\begin{tabular}{cccccc}
\includegraphics[width=0.13\linewidth]{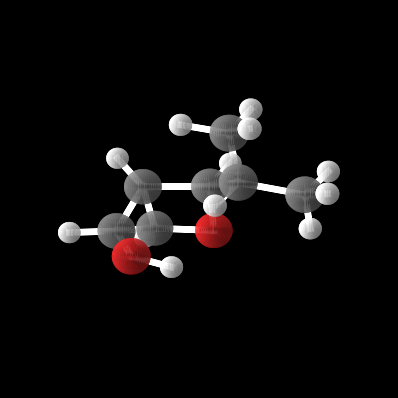}&
\includegraphics[width=0.13\linewidth]{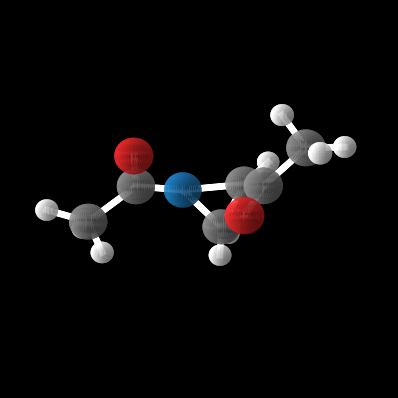}&
\includegraphics[width=0.13\linewidth]{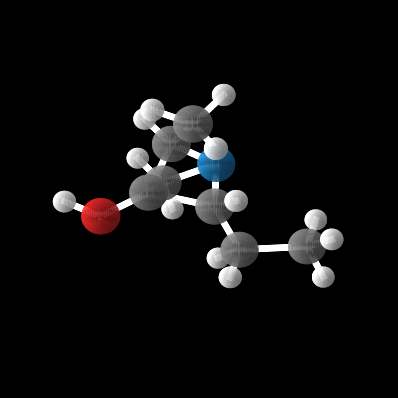}&
\includegraphics[width=0.13\linewidth]{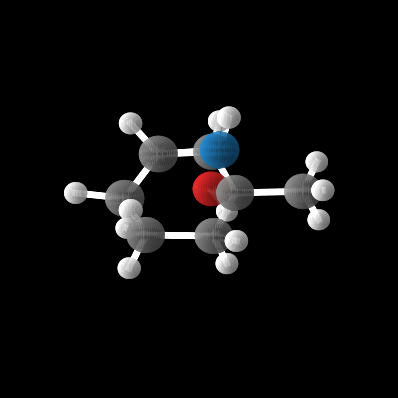}&
\includegraphics[width=0.13\linewidth]{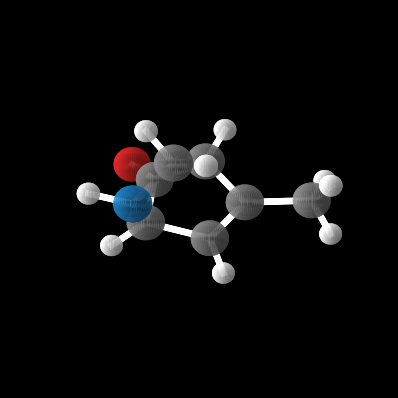}&
\includegraphics[width=0.13\linewidth]{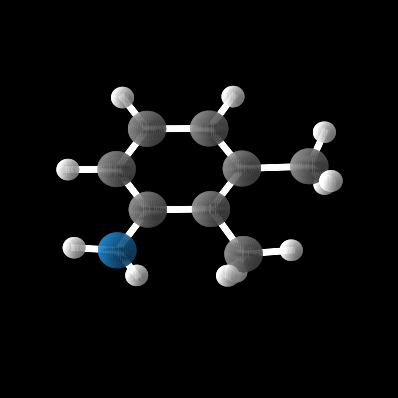}\\
(1) & (2) &(3) &(4) &(5) &(6) \\
\includegraphics[width=0.13\linewidth]{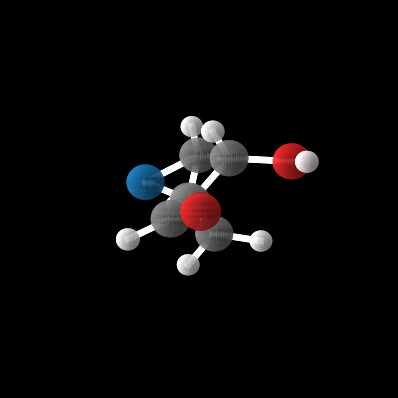}&
\includegraphics[width=0.13\linewidth]{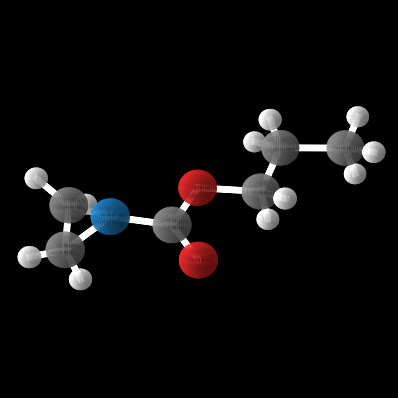}&
\includegraphics[width=0.13\linewidth]{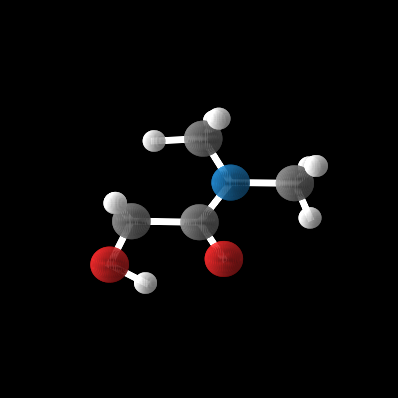}&
\includegraphics[width=0.13\linewidth]{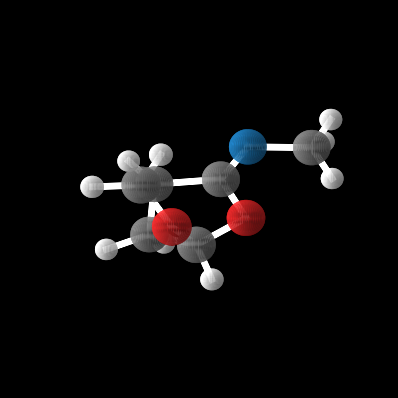}&
\includegraphics[width=0.13\linewidth]{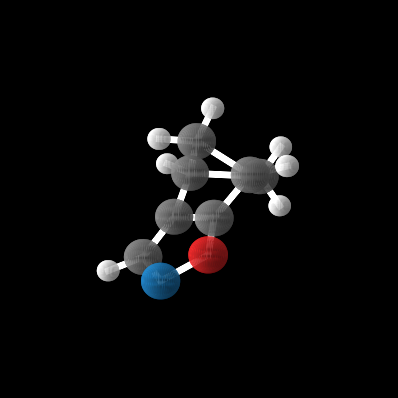}&
\includegraphics[width=0.13\linewidth]{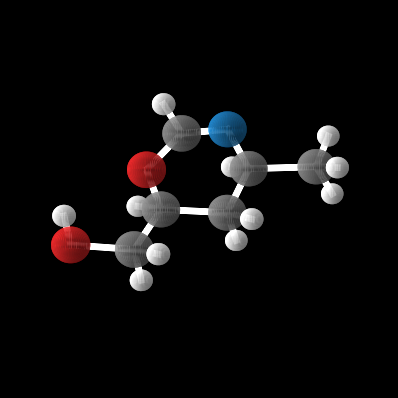}\\
(7) & (8) &(9) &(10) &(11) &(12) 
\end{tabular}
{\fontsize{8.5}{8.5}\selectfont
\begin{tabular}{l}
\textbf{Contexts:}\\
\text{(1) N atom: 21 | homo: -6.34 | lumo: 1.82 | alpha: 77.77 | gap: 8.16 | mu: 1.55 | Cv: -27.27}\\
\text{(2) N atom: 18 | homo: -6.62 | lumo: -0.85 | alpha: 74.21 | gap: 5.77 | mu: 3.27 | Cv: -19.37}\\
\text{(3) N atom: 22 | homo: -6.44 | lumo: 1.99 | alpha: 83.22 | gap: 8.43 | mu: 1.52 | Cv: -29.10}\\
\text{(4) N atom: 22 | homo: -6.56 | lumo: 2.01 | alpha: 81.16 | gap: 8.57 | mu: 0.31 | Cv: -30.52}\\
\text{(5) N atom: 18 | homo: -5.79 | lumo: -0.02 | alpha: 77.48 | gap: 5.77 | mu: 2.00 | Cv: -23.75}\\
\text{(6) N atom: 20 | homo: -5.30 | lumo: 0.34 | alpha: 91.41 | gap: 5.64 | mu: 1.54 | Cv: -24.78}\\
\text{(7) N atom: 16 | homo: -7.05 | lumo: -1.20 | alpha: 68.35 | gap: 5.84 | mu: 1.37 | Cv: -18.86}\\
\text{(8) N atom: 20 | homo: -6.94 | lumo: 0.94 | alpha: 78.01 | gap: 7.88 | mu: 2.36 | Cv: -25.09}\\
\text{(9) N atom: 16 | homo: -6.74 | lumo: 0.60 | alpha: 58.89 | gap: 7.34 | mu: 4.43 | Cv: -18.93}\\
\text{(10) N atom: 18 | homo: -6.76 | lumo: 0.77 | alpha: 73.96 | gap: 7.53 | mu: 1.95 | Cv: -24.84}\\
\text{(11) N atom: 16 | homo: -6.09 | lumo: -0.10 | alpha: 74.33 | gap: 5.98 | mu: 3.87 | Cv: -22.98}\\
\text{(12) N atom: 20 | homo: -6.82 | lumo: 0.53 | alpha: 76.55 | gap: 7.36 | mu: 1.12 | Cv: -26.04}
\end{tabular}}
\vspace{-0.3em}
\caption{
A few randomly selected RMFlow-generated molecules, together with the corresponding contexts.
}
\label{fig:qm9}
\end{figure}

\begin{table*}[!ht]
\vspace{-.25cm}
\fontsize{8.5}{8.5}\selectfont
\centering
\begin{tabular}{lcccc}
\specialrule{1.2pt}{1pt}{1pt}
\qquad Metrics & No Tuncation Error & Atomic Stab. ($\uparrow$) & Mol Stab. ($\uparrow$) & NFE($\downarrow$)\\
&(Discretization)&&&\\
\hline
{ENF} & \ding{55} & {$85_{\pm0.1}$\%} & {$4.9_{\pm0.2}$\%} & $\gg 1$ \\
E-DM \cite{hoogeboom2022equivariant}     &  \ding{55}  & $98.73_{\pm 0.1}$\% & $82.11_{\pm 0.4}$\%  & $\gg 1$    \\ 
Bridge \cite{wu2022diffusion}
& \ding{55} & $98.7_{\pm0.1}$\% & $81.8_{\pm0.2}$\% & $\gg 1$ \\
Bridge + Force \cite{wu2022diffusion} 
& \ding{55} & $98.8_{\pm0.1}$\% & $84.6_{\pm0.3}$\% & $\gg 1$ \\
GeoLDM \cite{xu2023geometric}  &  \ding{55} & $98.73$\% & $89.40_{\pm 0.5}$\% & $\gg 1$\\
GeoBFN~\cite{song2024unified} & \ding{55} & $99.0$\% & $93.9$\% & $\gg 1$ \\
E-DM + RLPF~\cite{zhou2025guiding} & \ding{55} & $99.1$\% & $93.4$\% & $\gg 1$ \\
\hline
MeanFlow w/o contexts
& \ding{51} & $98.2_{\pm0.07}$\% & $79.3_{\pm0.8}$\% & $1$ \\
MeanFlow w/ contexts
& \ding{51} & $98.4_{\pm0.05}$\% & $84.3_{\pm0.5}$\% & $1$ \\
RMFlow w/o contexts ({\bf ours})
& \ding{51} & $98.8_{\pm0.05}$\% & $90.1_{\pm0.5}$\% & $1$ \\
RMFlow w/ contexts ({\bf ours})
& \ding{51} & $98.9_{\pm0.05}$\% & $93.2_{\pm0.4}$\% & $1$ \\
RMFlow w/ contexts + RLPF ({\bf ours})
& \ding{51} & $98.9_{\pm0.05}$\% & $93.5_{\pm0.3}$\% & $1$ \\
\hline
{Data} & {--} & {$99\%$} & {$95.2\%$}   & {--} \\
\specialrule{1.2pt}{1pt}{1pt}
\end{tabular}
\vspace{-0.5em}
\caption{
Contrasting the performance of different models for QM9 molecule generation. We run RMFlow with contexts by randomly selecting $10^4$ contexts in the test dataset of QM9 five times.
} 
\label{tb:qm9_results}\vspace{-0.3cm}
\end{table*}

\subsection{Time Series: Dynamical System}\label{subsec:time-series}
Sampling trajectories in dynamical systems under event guidance is a key challenge for predicting and understanding complex phenomena such as climate and extreme events~\citep{perkins2013measurement,mosavi2018flood}. Recent works~\citep{finzi2023user} and \citep{huang2025improving} have introduced diffusion and FM models specifically designed for event-guided sampling. 

In this experiment, we perform dynamical system trajectory forecasting with MeanFlow and RMFlow, formulating it as a time series problem by discretizing the time variable $t$ into uniform intervals. Each trajectory (either from a dataset or sampled) is a discrete time series of vectors concatenated into $\vx_{\rm data}= [\vx(\tau_m)]_{m=1}^M\in\sR^{Md}$, where $M$ is the total number of time steps, $d$ is the dimension of the system, and $\vx(\tau_m)\in\sR^d$ denotes the discretized trajectory at time $\tau_m$. Our goal is to generate $\vx_{\rm data} = [\vx(\tau_m)]_{m=1}^M\in\sR^{Md}$ with 1-NFE using MeanFlow and RMFlow.

We train our models on the Lorenz and FitzHugh–Nagumo dynamical systems (see \cite[Appendix B.1]{huang2025improving} for a brief review of these two models);
using a U-Net backbone. For event guidance, where events are defined by a constraint function $E=\{\vx_{\rm data}\, |\, C(\vx_{\rm data}) > 0\}$, 
we adopt a simple but effective design: the event-guidance vector and the first three states $\vx(\tau_1), \vx(\tau_2), \vx(\tau_3)$ are embedded through an MLP $\phi_{\omega}$ into the target data space $\mathbb{R}^{Md}$. This avoids reliance on Tweedie’s formula as used in~\citep{finzi2023user,huang2025improving}.
Tables~\ref{tb:dynamic_tv} and~\ref{tb:dynamic_kl} show that RMFlow yields significantly better 1-NFE generation than MeanFlow, while achieving accuracy comparable to multi-NFE methods.

\begin{table}[!ht]
\centering
\fontsize{8.5}{8.5}\selectfont
\begin{tabular}{lrrrrr}
\toprule
& \multicolumn{2}{c}{Lorenz} & \multicolumn{2}{c}{FitzHugh-Nagumo} \\
\cmidrule{2-5}
Model & w/o $E$ ($\downarrow$) &  w/ $E$ ($\downarrow$) &  w/o $E$ ($\downarrow$) &  w/ $E$ ($\downarrow$) & NFE ($\downarrow$)\\
\midrule
Diffusion~\cite{huang2025improving} & 0.0314 & 0.1001 & 0.0277 & 0.1192 & 128 \\
FM~\cite{huang2025improving} & 0.0348 & 0.0972 & 0.0314 & 0.2164 & 128 \\
FDM~\cite{huang2025improving} & 0.0306 & 0.0914 & 0.0266 & 0.1168 & 128 \\
\midrule
MeanFlow & 0.0469 & 0.1250 & 0.0398 & 0.2268 & 1 \\
MeanFlow & 0.0366 & 0.1011 & 0.0345 & 0.1988 & 8 \\
MeanFlow & 0.0351 & 0.0991 & 0.0302 & 0.1723 & 32 \\
RMFlow (\textbf{ours}) & 0.0332 & 0.0956 & 0.0289 & 0.1543 & 1 \\
\bottomrule
\end{tabular}
\vspace{-0.5em}
\caption{
TV distance between the generated (by different models) and test trajectory distributions, estimated from histogram-based density approximations, with/without conditioning on the event.
}
\label{tb:dynamic_tv}
\end{table}

\begin{table}[!ht]
\centering
\fontsize{8.5}{8.5}\selectfont
\begin{tabular}{lrrrrr}
\toprule
& \multicolumn{2}{c}{Lorenz} & \multicolumn{2}{c}{FitzHugh-Nagumo} \\
\cmidrule{2-5}
Model & w/o $E$ ($\downarrow$) &  w/ $E$ ($\downarrow$) &  w/o $E$ ($\downarrow$) &  w/ $E$ ($\downarrow$) & NFE ($\downarrow$)\\
\midrule
Diffusion~\cite{huang2025improving} & 0.0056 & 0.2774 & 0.0260 & 0.3011 & 128 \\
FM~\cite{huang2025improving} & 0.0081 & 0.2560 & 0.0280 & 0.3468 & 128 \\
FDM~\cite{huang2025improving} & 0.0049 & 0.3045 & 0.0280 & 0.2084 & 128 \\
\midrule
MeanFlow & 0.0109 & 0.3887 & 0.0347 & 0.3921 & 1 \\
MeanFlow & 0.0091 & 0.3163 & 0.0297 & 0.2422 & 8 \\
MeanFlow & 0.0054 & 0.2722 & 0.0281 & 0.2490 & 32 \\
RMFlow (\textbf{ours}) & 0.0059 & 0.2866 & 0.0287 & 0.2499 & 1 \\
\bottomrule
\end{tabular}
\vspace{-0.5em}
\caption{
KL divergence between the generated (by different models) and test trajectory distributions, estimated from histogram-based density approximations, with/without conditioning on the event.
}
\label{tb:dynamic_kl}
\end{table}

\subsection{Text-to-Image}\label{subsec:text2image}
In this experiment, we train MeanFlow and RMFlow for text-to-image generation on the COCO  dataset~\cite{chen2015microsoft}. Following Stable Diffusion~\cite{rombach2022high}, all operations are performed in the latent space $\sR^{4\times 32\times 32}$. The mapping $\phi_{\omega}(\vc)$ converts the text conditions into initial latent states. Concretely, we fine-tune the pretrained text-embedding model \texttt{e5-base}~\citep{wang2022text} and attach an MLP to project the embeddings into the latent space. Additionally, we fine-tune the Stable Diffusion VAE decoder on COCO using PEFT~\citep{hu2022lora,dettmers2023qlora} so that it can decode the final latent state into images. Both MeanFlow and RMFlow use a 480M-parameter U-Net as the latent-space backbone.

We adopt the Karpathy split~\citep{karpathy2015deep} for training and validation, and evaluation is performed with COCO FID-30K following~\cite{rombach2022high,he2025flowtok} 
(details in Appendix~\ref{subsec:experiment-text-to-image}). As shown in Table~\ref{tb:coco}, RMFlow attains FID comparable to the best single-step generators on COCO, such as Distilled Stable Diffusion ~\citep{liu2023instaflow}, StyleGAN-T~\citep{sauer2023stylegan}. 
Importantly, RMFlow (and MeanFlow) is orthogonal to the other methods listed in Table~\ref{tb:coco}, as it does not rely on auxiliary models for training. In contrast, GAN-based approaches require a discriminator, and distilled models depend on a pretrained teacher.
 Moreover, our models are trained under limited computational resources (e.g., RTX 3090/4090 GPUs with 24 GB memory) using mixed-precision bf16, whereas most state-of-the-art models listed in Table~\ref{tb:coco} are trained on multiple A100 80 GB GPUs with full-precision fp16. These results indicate that RMFlow has strong potential for further improvement if trained with larger computational budgets. We also report the performance on CLIP score, see Appendix~\ref{Appendix:clip_score}.

\begin{figure}[!ht]
\centering
\begin{tabular}{cc}
\includegraphics[width=0.45\linewidth,height=0.22\textwidth]{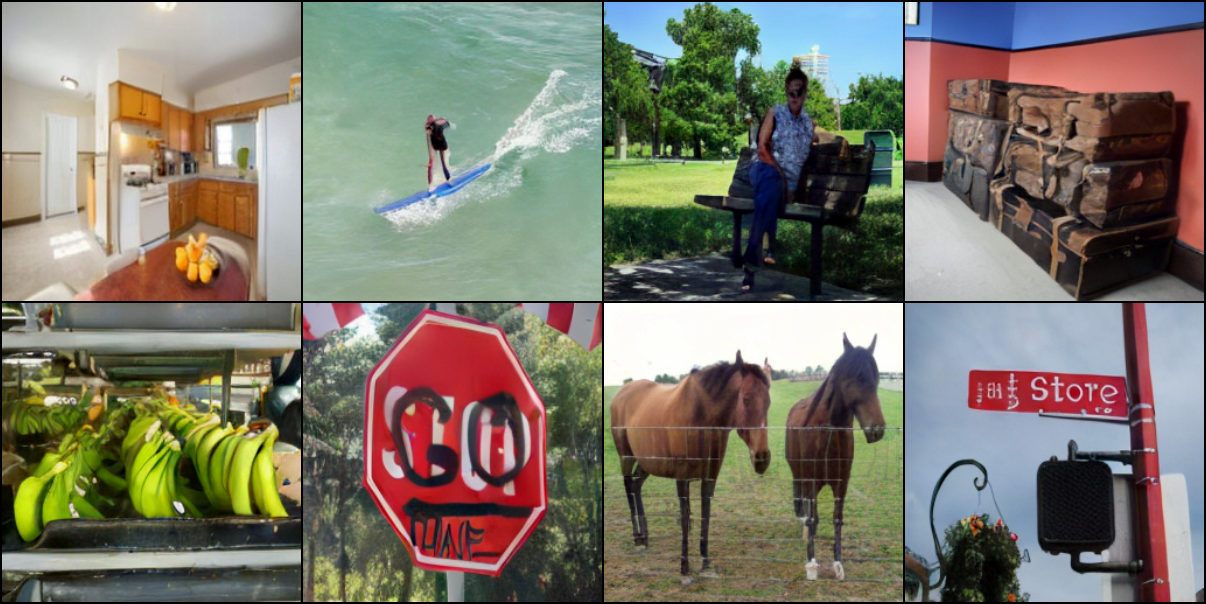}& 
\includegraphics[width=0.45\linewidth,height=0.22\textwidth]{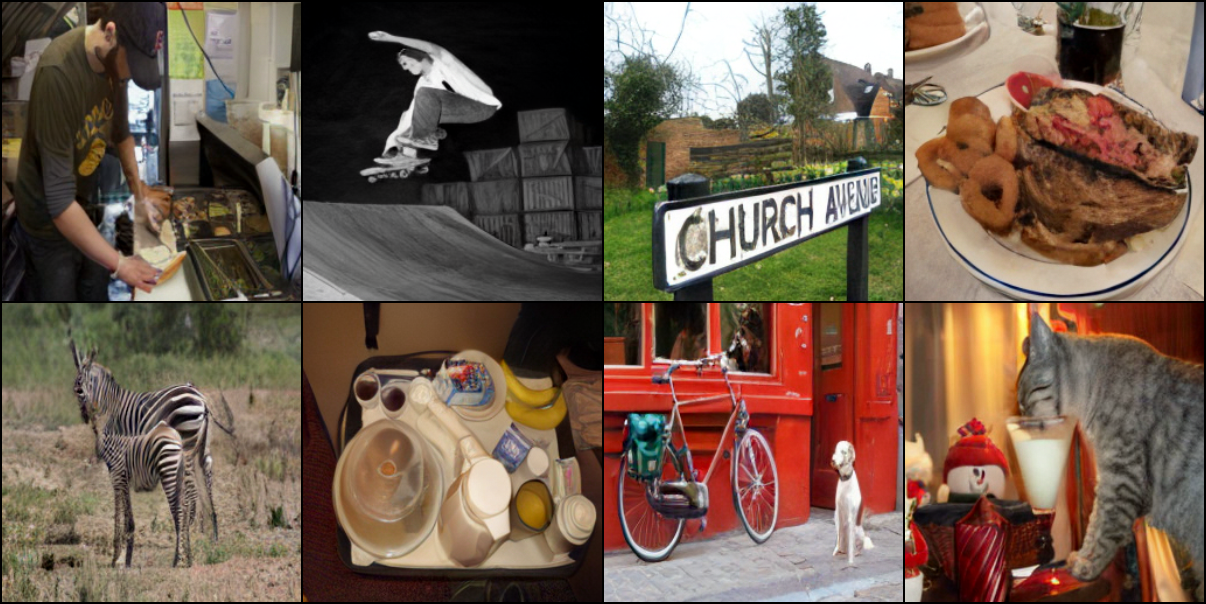}\\
{\fontsize{9}{9}\selectfont
\begin{tabular}{l}
(1) The dining table near the kitchen has a ... \\
(2) A woman riding a surfboard on a wave in the ... \\
(3) A woman sitting on a wooden park bench ... \\
(4) A stack of old trunks and luggage against ...\\
(5) Rows of unripe bananas on display in ...\\
(6) stop sign with spray painted words on it. \\
(7) A couple of horses that are next to a fence.\\
(8) A red and white street sign mounted on ...\\
\end{tabular}} & {\fontsize{8}{8}\selectfont
\begin{tabular}{l}
(1) A man making a sandwich on a lunch truck. \\
(2) A skateboarder performing a trick on an indoor ramp. \\
(3) There is a large sign that says a street name on it. \\
(4) A white plate topped with onion rings and ...\\
(5) A big zebra and a little zebra standing and looking.\\
(6) The meal is ready on the tray to be eaten. \\
(7) A bike and a dog on the sidewalk outside a ...\\
(8) A cat up on a desk drinking milk from a glass.\\
\end{tabular}}
\end{tabular}
\vspace{-0.3em}
\caption{
COCO dataset samples generated using 1-NFE RMFlow conditioned on different input prompts.}
\label{fig:coco}
\end{figure}

\begin{table}[!ht]
\fontsize{6.5}{6.5}\selectfont
\centering
\begin{tabular}{lcccccc}
\toprule
      & Type & params  & NFE & Teacher-free & COCO FID-30K ($\downarrow$) & Resolution\\
& & & & (or discriminator-free) & & \\
\midrule
Stable Diffusion v1.5~\cite{rombach2022high} & Diff & 860M &  $\gg 1$ &\ding{51} & 9.62 & $256\times 256$\\
Stable Diffusion v2.1~\cite{rombach2022high} & Diff & 860M  &  $\gg 1$&\ding{51}  & 13.45& $256\times 256$\\
FlowTok-XL~\cite{he2025flowtok} & ODE & 698M & $\gg 1$&\ding{51}  & 10.1& $256\times 256$\\
Show-o~\cite{xie2024show} & Diff & 1.3B  & $\gg 1$&\ding{51}  & 9.24& $256\times 256$ \\
PixArt~\cite{chen2023pixart} & ODE & 630M & $\gg 1$&\ding{51}  & 7.32& $256\times 256$ \\
LDM~\cite{rombach2022high} & Diff & 1.4B & $\gg 1$&\ding{51} & 12.63& $256\times 256$ \\
\midrule
VQGAN+T~\cite{jahn2021high} & GAN & 1.1B & 1 &\ding{55}& 32.76& $256\times 256$ \\
LAFITE~\cite{zhou2022towards} & GAN & 75M & 1 &\ding{55}& 26.94& $256\times 256$ \\
StyleGAN-T~\cite{sauer2023stylegan} & GAN & 1B & 1 &\ding{55}& 13.90& $256\times 256$ \\
InstaFlow~\cite{liu2023instaflow} & ODE & 900M & 1 &\ding{55}& 13.10& $512\times 512$ \\
UFOGen~\cite{xu2024ufogen} & Diff & 900M & 1 &\ding{55}& 12.78& $512\times 512$ \\
Stable Diffusion + Distill~\cite{liu2023instaflow} & Diff & 900M & 1 & \ding{55}& 34.6 & $256\times 256$ \\
Rectified Flow + Distill~\cite{liu2023instaflow} & ODE & 900M & 1 & \ding{55}& 20.0 & $256\times 256$ \\
\midrule
MeanFlow & ODE & 620M  & 1 &\ding{51}& 27.31& $256\times 256$ \\
RMFlow ({\bf ours}) & Diff & 620M  & 1 &\ding{51}& 18.91& $256\times 256$ \\

\bottomrule
\end{tabular}
\vspace{-0.5em}
\caption{
FID of the generated images on the benchmark COCO2014 dataset using different models.}
\label{tb:coco}
\end{table}

\section{Concluding Remarks}
In this work, we introduce RMFlow, a refinement of MeanFlow with minimal computational and memory overhead. The central innovation lies in augmenting the 1-NFE MeanFlow with a subsequent noise injection step, which facilitates likelihood maximization. To support this mechanism, we propose a novel loss function that jointly minimizes the discrepancy between the exact and learned probability paths while maximizing likelihood. Empirical results demonstrate that 1-NFE RMFlow achieves strong performance in multimodal generation tasks.

A promising direction for future research is to extend RMFlow to support multiple mean flow transport steps. Specifically, we envision applying a noise-injection step after each transport step, which would require the design of a corresponding loss function to maintain likelihood maximization. This extension presents additional challenges compared to the current formulation and opens avenues for more expressive and accurate generative modeling. Another limitation of RMFlow is that it uses a fixed parameter $\sqrt{\sigma_{\min}^2 - \sigma^2}$ during the noise injection step, which may be suboptimal. As future work, we plan to explore more adaptive strategies for selecting this parameter, such as making it learnable or following a dynamic schedule.

\clearpage
\clearpage
\section{Acknowledgement}
This material is based on research sponsored by NSF grants DMS-2208361, DMS-2219956, DMS-2436343, and DMS-2436344, and DOE grants DE-SC0023490, DE-SC0025589, and DE-SC0025801.

\section*{Ethics Statement}
In this paper, we propose a new framework to improve MeanFlow for efficient data generation. The new model can generate high-fidelity data efficiently. Our work belongs to fundamental research and is expected to improve existing models for generative modeling. Our work is methodological, and we validate our proposed approaches on the benchmark datasets. We do not expect to cause negative societal problems. Furthermore, we do not see any issues with potential conflicts of interest and sponsorship, discrimination/bias/fairness concerns, privacy and security issues, legal compliance, and research integrity issues (e.g., IRB, documentation, research ethics.

\section*{Reproducibility Statement}
We are committed to conducting reproducible research. To ensure the integrity and transparency of our work, we employ a multifaceted approach: First, we meticulously compare the novelty of our research against existing literature. This involves a thorough examination of the current state of the field to identify gaps in knowledge and demonstrate the unique contributions of our work. Second, we provide detailed derivations of our proposed approaches and theoretical results. By carefully outlining the mathematical underpinnings of our methods, we enhance the understanding of our work and facilitate its verification by others. Third, we conduct rigorous experiments using widely recognized benchmark datasets. This allows us to evaluate the performance of our methods against established standards and provides a solid foundation for comparison with other approaches. Fourth, we meticulously report experimental details, including the specific datasets used, parameters chosen, and evaluation metrics employed.
Finally, we make all experimental codes, accompanied by comprehensive documentation, publicly available. This open-source approach empowers researchers to inspect our methods, verify our results, and build upon our work. By sharing our code, we foster collaboration, advance the field, and contribute to the overall reproducibility of scientific research.

\newpage
\appendix

\section{Technical Proofs}

\thmKL*

\begin{proof}[proof of Theorem~\ref{thm:KL}]
We begin with the marginal likelihood:
\begin{equation}\label{eq:elbo}
\begin{aligned}
\log p_\theta(\vx_{\rm tgt}) &= \log \mathbb{E}_{\vx_0} \big[p_\theta(\vx_{\rm tgt}|\vx_0)) \big]\\
&\geq \mathbb{E}_{\vx_0}\big[\log p_\theta(\vx_{\rm tgt}|\vx_0)\big],
\end{aligned}
\end{equation}
where the inequality follows from Jensen's inequality.  

Taking expectation over $\vx_{\rm tgt}$ gives
\begin{equation}
\begin{aligned}
\mathbb{E}_{\vx_{\rm tgt}} [\log p_\theta(\vx_{\rm tgt})]
&\geq \mathbb{E}_{\vx_0, \vx_{\rm tgt}}\big[\log p_\theta(\vx_{\rm tgt}|\vx_0)\big].
\end{aligned}
\end{equation}

Now, by substituting the log-likelihood expression \eqref{eq:log-likelihood}, we obtain
\begin{equation}
\begin{aligned}
\mathbb{E}_{\vx_0, \vx_{\rm tgt}}\big[\log p_\theta(\vx_{\rm tgt}|\vx_0)\big]
& =\mathbb{E}_{\vx_0, \vx_{\rm tgt}}\Big[ - \frac{1}{2 (\sigma_{\min}-\sigma)^2} \big\| \vx_{\rm tgt}  - \big( \vx_0 + \hat{\mathbf{u}}_{0,1}(\vx_0; \theta) \big) \big\|^2 + C\Big]
\\
& =  -\frac{1}{2 (\sigma_{\min}-\sigma)^2}\mathbb{E}_{\vx_0, \vx_{\rm tgt}}\Big[ \big\| \vx_{\rm tgt}  - \big( \vx_0 + \hat{\mathbf{u}}_{0,1}(\vx_0; \theta) \big) \big\|^2 \Big]+ C
\\
& =  -\frac{1}{2 (\sigma_{\min}-\sigma)^2}\gL_{\rm NLL} + C
\end{aligned}
\end{equation}

Combining the inequalities, there exist constants $A, C > 0$ such that
\begin{equation}
    -A \cdot \gL_{\rm NLL} + C \;\leq\; \mathbb{E}_{\vx_{\rm tgt}} [\log p_\theta(\vx_{\rm tgt})].
\end{equation}

Finally, recall that
\begin{equation}
    \mathbb{E}_{\vx_{\rm tgt}} [\log p_\theta(\vx_{\rm tgt})] 
    = -H(p_{\rm tgt}) - D_{\rm KL}(p_{\rm tgt}\,\|\,p_\theta),
\end{equation}
which establishes the desired relation.

\end{proof}

\section{Experiment Setup}\label{sec:experiment_setup}

\paragraph{Flow map design.} 
For a given pair $\vz = (\vx_0, \vx_1)$, we choose the conditional velocity field $\vu_t(\vx|\vz)$ following \citep{albergo2023building}, i.e.,
\begin{equation}\label{eq:velocity}
\vu_t(\vx|\vz) = \frac{\dot\gamma(t)}{\gamma(t)}\big(\vx-t\vx_1-(1-t)\vx_0 \big) + (\vx_1-\vx_0),
\end{equation}
where $\gamma(t) = \eta (1-t)$ with $\gamma(0)=\eta$, $\gamma(1)=0$, and $\eta=10^{-2}, 5\times 10^{-2}, 10^{-1}$.

\textbf{Training Setup}: 

\textit{Loss Metric}: Following~\cite{geng2025mean}, we focus on part I (mean flow loss) of $\gL_{\rm RMFlow}$~\ref{eq:RMFlow-objective}, expressed as $\gL=|\Delta|^{2\zeta}_2$, where $\Delta$ denotes the regression error. In practice, we apply a weight $w=\Big(\frac{1}{\|\Delta\|^{2}_2+1e-3}\Big)^m$ with $m = 1-\zeta$. When $m=0.5$, this formulation becomes closely related to the Pseudo-Huber loss introduced in~\cite{song2023improved}; hence, we adopt $m=0.5$ for all experiments. The hyperparameters $\lambda_1$ and $\lambda_2$ in $\gL_{\rm RMFlow}$ are selected individually for each experiment, as reported below.

\textit{Time sampling and condition}: For the part III and II in $\gL_{\rm RMFlow}$~\ref{eq:RMFlow-objective}, we only need to sample $\vx_0$ so that the time is always zero. Similar one sampling method used in~\cite{geng2025mean}, we sample $(t,r)$ such that $p(t) = 2t$ and 
$$p(r|t)=q\cdot\frac{\bm{1}_{0\leq r < t}}{t} + (1-q)\delta(r-t)$$
so for given sampled $t$, we sample $r$ from $\gU[0, t)$ with probability $q$ and set $r=t$ probability $1-q$. $q$ is selected individually for each experiment, as reported below. We use positional embedding for $(r,t)$, which are then combined and provided as the conditioning of the neural network. As used in~\cite{geng2025mean}, it is not necessary for the network to directly condition on (r, t), so we have $\vu_{t,r}(\cdot;\theta):=\texttt{net}(\cdot,t,t-r)$.

\subsection{Ablation study}\label{subsec:experiment-Synthetic}
In this task, we focus on how the value of $\lambda_1$ balances the Wasserstein (part {\text I} in~\eqref{eq:RMFlow-objective}) and the likelihood (part {\text II} in~\eqref{eq:RMFlow-objective}).
\subsubsection{Gaussian Mixture}
Table~\ref{tb:ablation_mixG} shows that the RMFlow has the best performance when $\lambda_1 = 1e-1,1e-2$.
\begin{table}[!ht]
\centering
\begin{tabular}{cccccc}
\toprule
$\lambda_1$ & 0 & 1e-2 & 1e-1 & 1e1 & 1e2 \\
\midrule
&\includegraphics[width=0.15\linewidth]{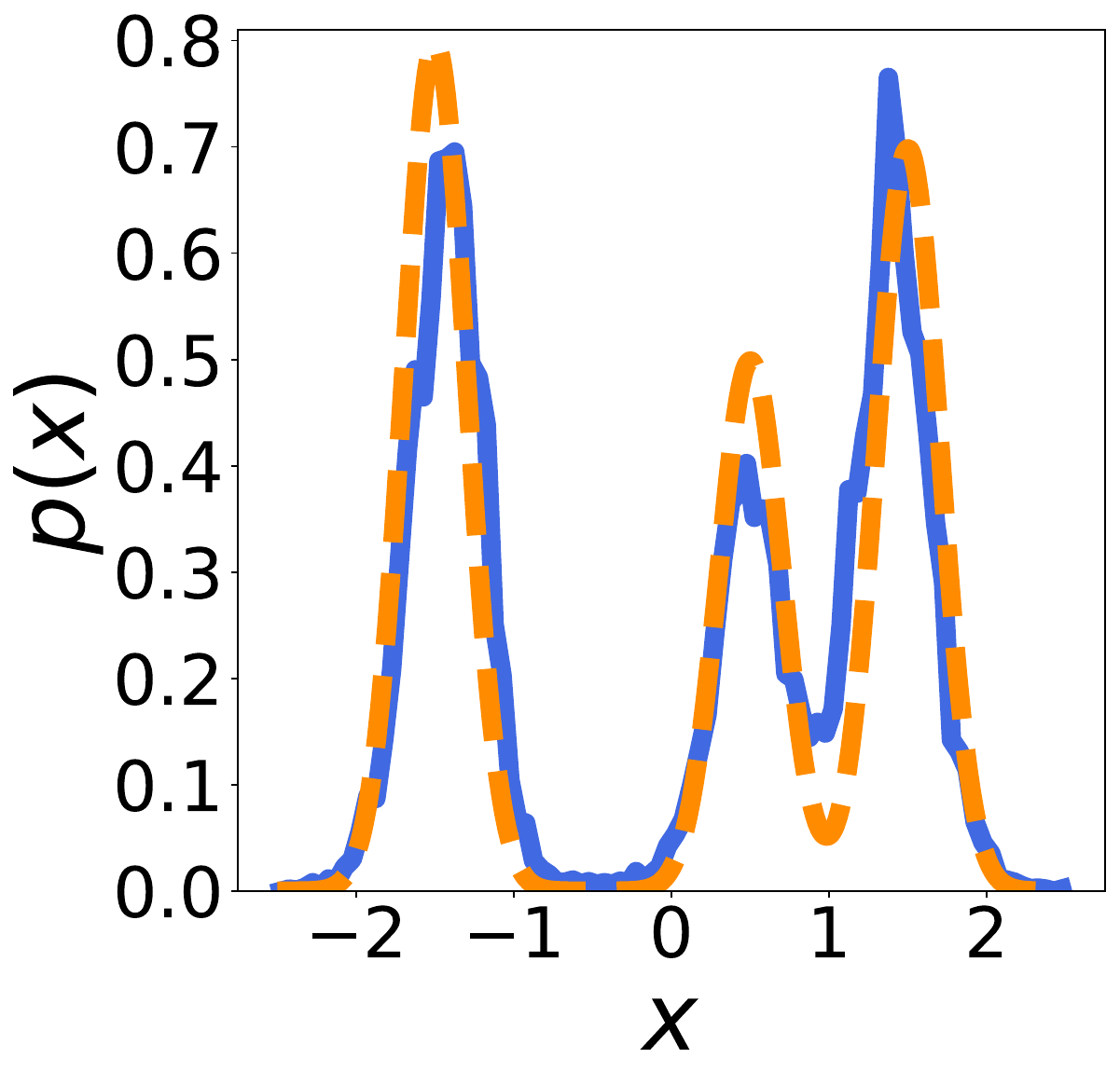} 
& 
\includegraphics[width=0.15\linewidth]{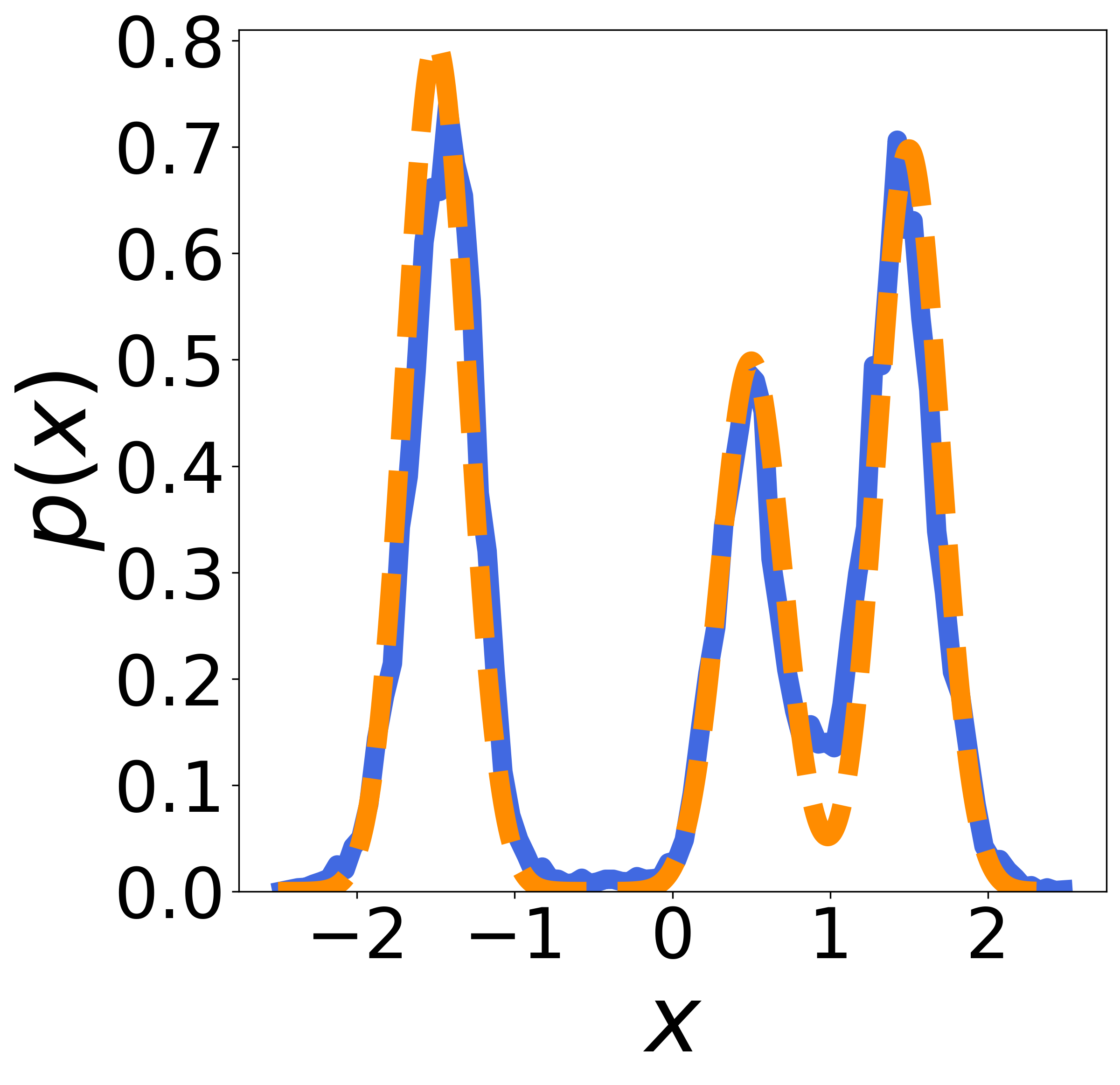} & 
\includegraphics[width=0.15\linewidth]{figs/mixG/ablation/x_1_prime.png} & 
\includegraphics[width=0.15\linewidth]{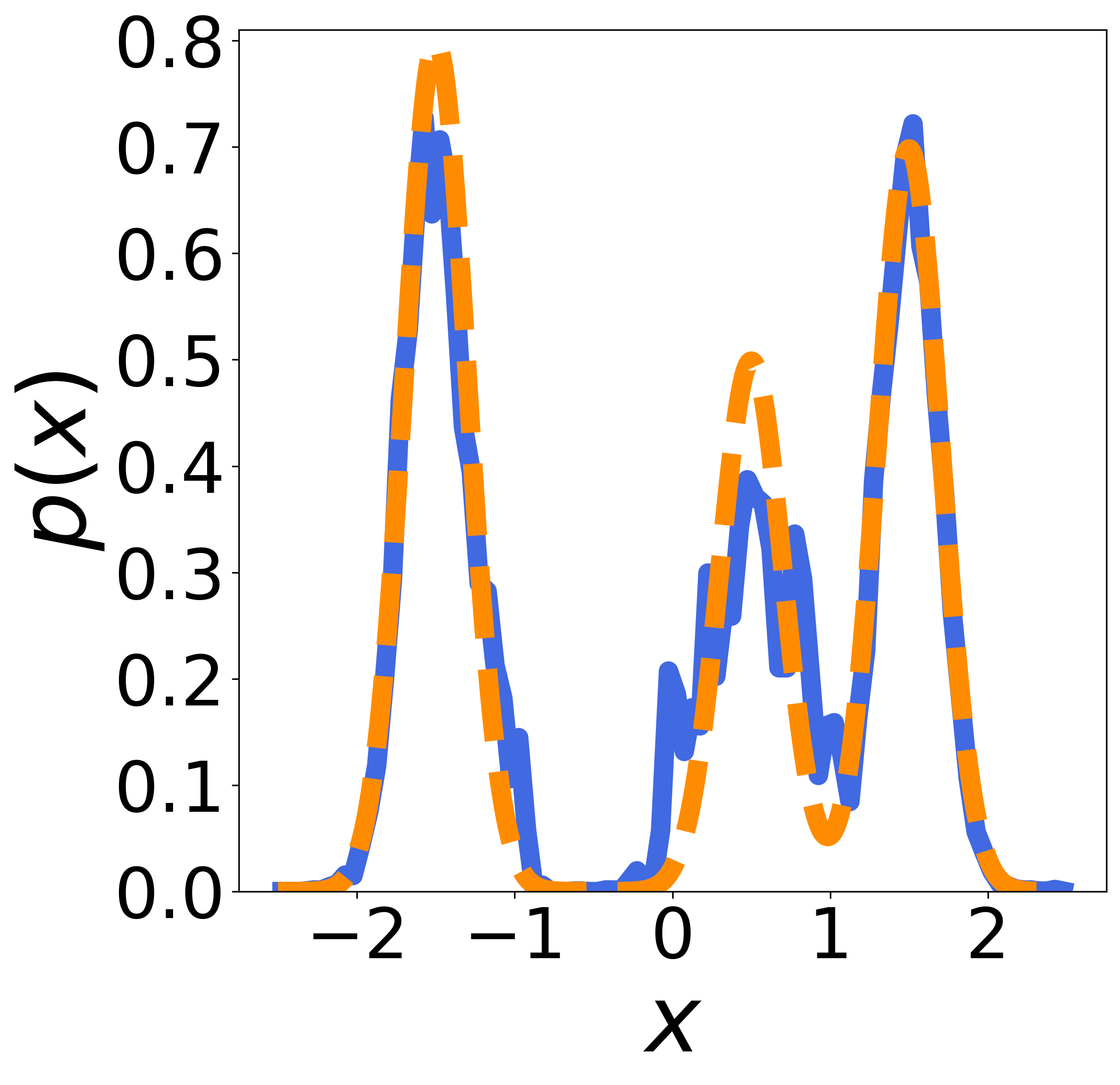} & 
\includegraphics[width=0.15\linewidth]{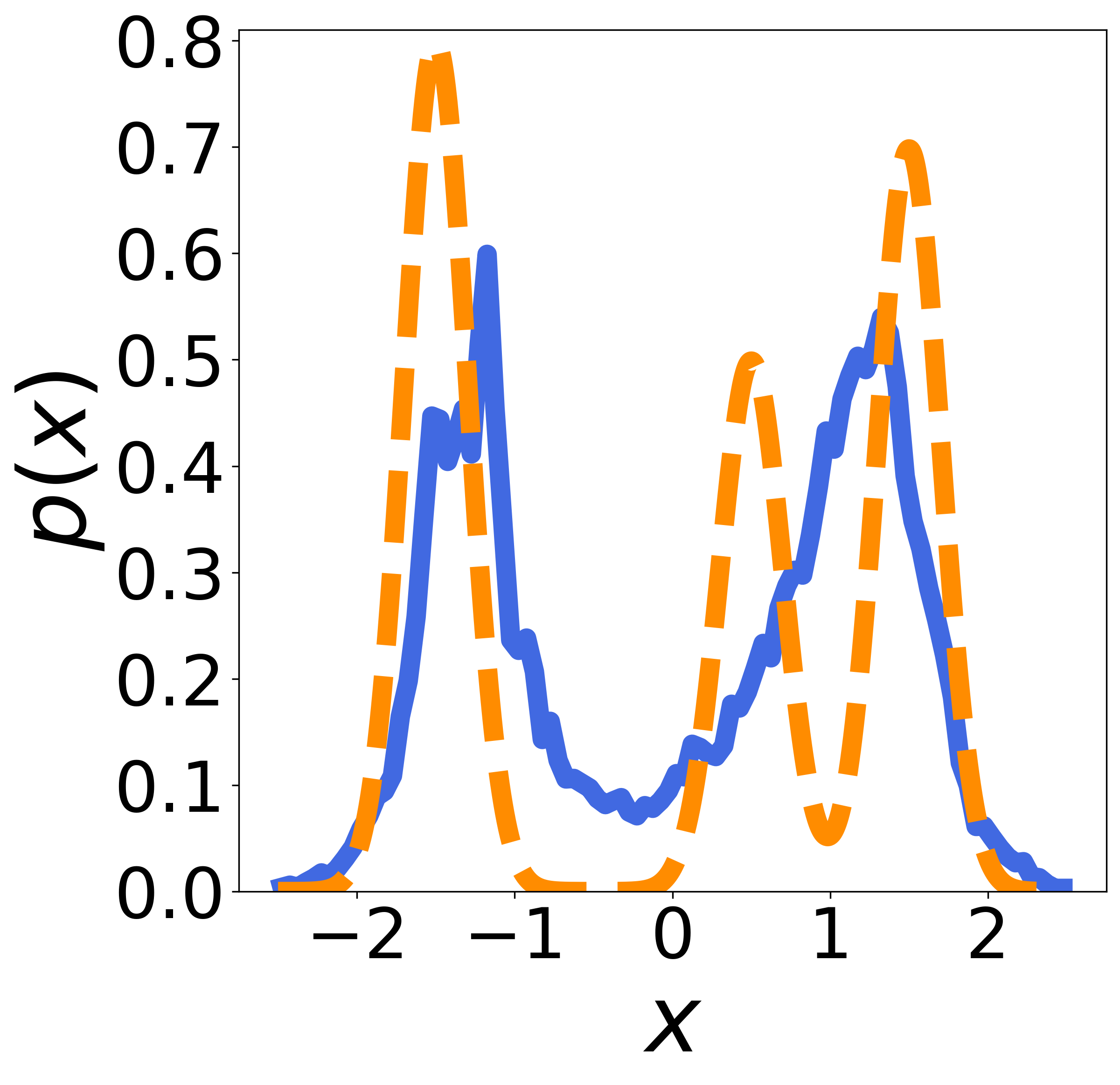}\\
\bottomrule
\end{tabular}
\caption{\footnotesize Ablation study on the Gaussian mixture task with varying $\lambda_1$.}
\label{tb:ablation_mixG}
\end{table}
\subsection{Checkerboard}
Table~\ref{tb:ablation_checkerboard} shows that the RMFlow has the best performance when $\lambda_1 = 1e-1$.
\begin{table}[!ht]
\centering
\begin{tabular}{cccccc}
\toprule
$\lambda_1$ & 0 & 1e-2 & 1e-1 & 1e1 & 1e2 \\
\midrule
TV & 0.238 & 0.201 & 0.173 & 0.222 & 0.289 \\
KL & 0.311 & 0.228 & 0.163 & 0.237 & 0.425\\
\bottomrule
\end{tabular}
\caption{\footnotesize Ablation study on the 2D Checkerborad synthetic task with varying $\lambda_1$.}
\label{tb:ablation_checkerboard}
\end{table}
\subsection{QM9}
Table~\ref{tb:ablation_qm9} shows that the RMFlow has the best performance when $\lambda_1 = 5e-2$.
\begin{table}[!ht]
\centering
\begin{tabular}{ccccccc}
\toprule
$\lambda_1$ & 0 & 1e-2 & 5e-2 & 1e-1 & 1 & 1e1\\
\midrule
Atomic Stab. & 98.4 & 98.8 & 98.9 & 98.8 & 98.0 & 97.6 \\
Mol Stab. & 84.3 & 92.1 & 93.2 & 92.9 & 87.2 & 83.9\\
\bottomrule
\end{tabular}
\caption{\footnotesize Ablation study on the QM9 molecule generation task with varying $\lambda_1$.}
\label{tb:ablation_qm9}
\end{table}

\subsubsection{Configuration}
The Gaussian mixture and checkerboard experiments share the same configuration, differing only in the model’s input and output dimensions. The configuration is summarized in Table~\ref{tb:Synthetic_config}.
\begin{table}[!ht]
\centering
\begin{tabular}{lcc}
\toprule
\textbf{Model} &number of layers & 6 \\
&hidden dim & 256 \\
&activation & SiLU \\
\midrule
\textbf{Train}& iteration & 1e5  \\
& batch size & 256 \\
& optimizer & Adam~\cite{diederik2014adam} \\
& lr schedule & polynomial \\
& lr & 1e-4 \\
& Adam($\beta_1$, $\beta_2$) & (0.9, 0.95) \\
& ema decay & 0.9995 \\
& precision & fp32 \\
& $\lambda_1$ & [1e-1, 1e-2]\\
\bottomrule
\end{tabular}
\caption{\footnotesize Training Setup and Backbone Configuration for MeanFlow and RMFlow on Synthetic Tasks}
\label{tb:Synthetic_config}
\end{table}

\subsection{Text-to-Image}\label{subsec:experiment-text-to-image}
\textbf{VAE Decoder}: We use the VAE in \texttt{stabilityai/sd-vae-ft-mse}~\cite{rombach2022high} and fine-tune the \texttt{conv-in}, \texttt{up-blocks} (0,-1), \texttt{conv-out}, \texttt{quant-conv}, and \texttt{post-quant-conv} parts of the VAE decoder to refine the decoded image quality on the target dataset using the VAE decoder reconstruction loss~\cite{kingma2013auto} with the learning rate $1e-5$.

\textbf{Text Embedding Model}: Following Section 5.4, we optimize solely the MLP parameters $\omega$ that map pretrained \texttt{e5} embeddings to the latent space under \eqref{eq:RMFlow-objective}. Accordingly, $\phi_{\omega}$ is implemented as a fixed \texttt{e5} encoder plus a trainable MLP (i.e., a PEFT setup).

\textbf{U-Net Backbone and Training Setup}
We build the latent-space backbone for Mean Flow and RMFlow by reusing selected U-Net blocks ($\sim$480M parameters) from pretrained Stable Diffusion~\cite{rombach2022high}, with an additional time embedding for $r$. This is effectively transfer learning. Table~\ref{tb:text2img_config} shows the U-Net configuration and the training setup.

\begin{table}[!ht]
\fontsize{7.5}{7.5}\selectfont
\centering
\begin{tabular}{llc}
\toprule
\textbf{U-Net}
& block-out-channels   & [320, 640, 1280, 1280] \\
& down-block types     & [CrossAttnDownBlock2D, CrossAttnDownBlock2D, \\
&                      & \hspace{4em} CrossAttnDownBlock2D, DownBlock2D] \\
& layers-per-block     & 2 \\
& attention-head-dim   & 8 \\
& cross-attention-dim  & 768 \\
\midrule 
\textbf{Pre-train} & loss & $\gL_{\rm CMFM}$ \\
& epochs & 500  \\
& batch size per GPU & 16 \\
& optimizer & Adam \\
& lr schedule & polynomial \\
& Warm up epoch & 2 \\
& lr & 1e-4 \\
& Adam($\beta_1$, $\beta_2$) & (0.9, 0.95) \\
& ema decay & 0.9995 \\
& precision & fp16 \\
& trainable param & 480M \\
\midrule
\textbf{Post-train} & loss & $\gL_{\rm RMFlow}$ \\
& epochs & 500  \\
& batch size per GPU & 16 \\
& optimizer & Adam \\
& lr schedule & polynomial \\
& Warm up epoch & 10 \\
& lr & 5e-5 \\
& Adam($\beta_1$, $\beta_2$) & (0.9, 0.95) \\
& ema decay & 0.9995 \\
& precision & bf16 \\
& trainable param & 210M \\
&$\lambda_1$ & [5e-2, 1e-2] \\
\midrule
Reg for $\phi_{\omega}$ & $\lambda_2$ & 1e-4 \\
Time sample & $p(r\neq t)$ & 0.25 \\
\bottomrule
\end{tabular}
\caption{\footnotesize Training Setup and Backbone Configuration for MeanFlow and RMFlow on the COCO Text-to-Image Dataset}
\label{tb:text2img_config}
\end{table}

\subsection{Context-to-Molecule}\label{subsec:experiment-context-to-molecule}
\textbf{Context Embedding Model}: We implement $\phi_{\omega}$ as an MLP followed by a single EGNN layer. The MLP projects the context vector into the data space, and the EGNN layer further refines these representations. The EGNN configuration matches that used in the Mean Flow/RMFlow backbone.

\textbf{EGNN Backbone and Training Setup}
We use the same EGNN Backbone in~\citep{hoogeboom2022equivariant} augmented with a time-embedding module for the additional scalar time variable $r$. Table~\ref{tb:cond2mole_config} shows the training setup and configuration.

\begin{table}[!ht]
\fontsize{7.5}{7.5}\selectfont
\centering
\begin{tabular}{llc}
\toprule
\textbf{EGNN}
& number of layers   & 9 \\
& acitivation     & SiLU \\
& hidden dim     & 256 \\
\midrule 
\textbf{Pre-train} & loss & $\gL_{\rm CMFM}$ \\
& epochs & 1500  \\
& batch size per GPU & 64 \\
& optimizer & Adam \\
& lr schedule & polynomial \\
& Warm up epoch & 10 \\
& lr & 1e-4 \\
& Adam($\beta_1$, $\beta_2$) & (0.9, 0.95) \\
& ema decay & 0.9995 \\
& precision & fp32 \\
\midrule
\textbf{Post-train} & loss & $\gL_{\rm RMFlow}$ \\
& epochs & 1500  \\
& batch size per GPU & 64 \\
& optimizer & Adam \\
& lr schedule & polynomial \\
& Warm up epoch & 10 \\
& lr & 1e-4 \\
& Adam($\beta_1$, $\beta_2$) & (0.9, 0.95) \\
& ema decay & 0.9995 \\
& precision & fp16 \\
&$\lambda_1$ & [1e-2, 5e-2] \\
\midrule
Reg for $\phi_{\omega}$ & $\lambda_2$ & 1e-4 \\
Time sample & $p(r\neq t)$ & 0.5 \\
\bottomrule
\end{tabular}
\caption{\footnotesize Training Setup and Backbone Configuration for MeanFlow and RMFlow on Context-to-Molecule Generation on QM9 Dataset}
\label{tb:cond2mole_config}
\end{table}

\subsection{Time Series: Dynamic System}\label{subsec:experiment-dynamic-system}
\textbf{Trajectory dataset}: we use the same dataset as decribed in \cite[Appendix B.1]{huang2025improving}.

\textbf{Models}: (1) We implement the guidance embedding function $\phi_{\omega}$ as an MLP that maps the event-guidance vector together with the first three states $\vx(\tau_1), \vx(\tau_2), \vx(\tau_3)$ into the prior sample; (2) we adopt the UNet architecture from~\cite{finzi2023user}, adding an additional time embedding for $r$.

\textbf{Training Setup}: see Table~\ref{tb:dynamic_config}
\begin{table}[!ht]
\fontsize{7.5}{7.5}\selectfont
\centering
\begin{tabular}{lcc}
\toprule
\textbf{Train}& iteration & 1e5  \\
& batch size & 500 \\
& optimizer & Adam \\
& lr & 1e-4 \\
& weight decay & 0.995 \\
&$\lambda_1$ & 1e-1\\
&$\lambda_2$ & 1e-4\\
\bottomrule
\end{tabular}
\caption{\footnotesize Training Setup for MeanFlow and RMFlow on Dynamical System Forecasting Tasks}
\label{tb:dynamic_config}
\end{table}

\subsection{Additional Experiments}\label{Appendix:clip_score}
In this section, we report the comparison of CLIP score of our RMFlow, MeanFlow, and some models on the COCO dataset (2017 version dataset splitting, with 5000 images, following~\cite{liu2023instaflow}). See Table~\ref{tb:coco_clip}.
\begin{table}[!ht]
\fontsize{6.5}{6.5}\selectfont
\centering
\begin{tabular}{lcccccc}
\toprule
& Type & params  & NFE & Teacher-free & FID-5k & Clip ($\uparrow$)\\
& & & & (or discriminator-free) & \\
\midrule
Stable Diffusion v1.5~\cite{rombach2022high} & Diff & 860M &  $\gg 1$ &\ding{51} & 20.1 & 0.315 \\
\midrule
InstaFlow~\cite{liu2023instaflow} & ODE & 900M & 1 &\ding{55}& 23.4 & 0.304\\
StyleGAN-T~\cite{sauer2023stylegan} & GAN & 1B & 1 &\ding{55}& 24.1 & 0.305 \\
PD-SD~\cite{meng2023distillation} & Diff & N/A & 1 &\ding{55}& 37.2 & 0.275\\
\midrule
MeanFlow & ODE & 620M  & 1 &\ding{51} & 38.5 & 0.273 \\
RMFlow ({\bf ours}) & Diff & 620M  & 1 &\ding{51}& 27.9 & 0.291 \\
\bottomrule
\end{tabular}
\vspace{-0.9em}
\caption{\footnotesize CLIP scores of the generated images on the benchmark COCO2017 dataset using different models.}
\label{tb:coco_clip}
\end{table}

\subsection{Policy Gradient with Physical Feedback}\label{sec:rlpf_loss}
We define a reward $r(\hat\vx_{\rm tgt})$ to quantify the molecule stability of the generated sample $\hat\vx_{\rm tgt}$ on QM9 dataset, then define the policy gradient following~\cite{black2023training}:
\begin{equation*}
\nabla_{\theta}\gL_{\rm RL}:=\E\Big[\nabla_{\theta}\log p_{\theta}(\vx_{\rm tgt}\mid\vx_0)r(\hat\vx_{\rm tgt})\Big]
\end{equation*}
and the corresponding loss function
\begin{equation*}
\gL_{\rm RL}:=-\E\Big[\log p_{\theta}(\vx_{\rm tgt}\mid\vx_0)r(\hat\vx_{\rm tgt})\Big]   
\label{eq:rlft_loss}
\end{equation*}
where 
\begin{equation*}\label{eq:log-likelihood-2}
\log p_\theta(\vx_{\rm tgt} \mid \vx_0) = -\frac{1}{2 (\sigma_{\min}^2-\sigma^2)} \big\| \vx_{\rm tgt}  - \big( \vx_0 + \hat{\vu}_{0,1}(\vx_0; \theta) \big) \big\|^2 + C,
\end{equation*}
We then perform reinforcement learning fine-tuning by augmenting the RMFlow objective~\ref{eq:RMFlow-objective}:
$$\gL_{\rm RMF+RL}(\theta,\omega)=\gL_{\rm RMFlow}(\theta,\omega) + \eta \gL_{\rm RL}(\theta)$$
where $\eta$ is a hyperparameter.

\end{document}